%% file: main.tex
\documentclass[11pt]{article}
\usepackage{arxiv} 

\usepackage[utf8]{inputenc}
\usepackage[T1]{fontenc}
\usepackage{amsmath,amssymb,amsthm}
\usepackage{graphicx}
\usepackage{booktabs}
\usepackage{microtype}
\usepackage[hidelinks]{hyperref}
\hypersetup{pdftitle={Epistemic Sybil Resistance: Multiplying AI Agents Without Multiplying Evidence},
            pdfauthor={Marc Bara}}

\newtheorem{theorem}{Theorem}
\newtheorem{proposition}{Proposition}
\newtheorem{corollary}{Corollary}
\theoremstyle{definition}
\newtheorem{definition}{Definition}

\allowdisplaybreaks

\title{Epistemic Sybil Resistance:\\Multiplying AI Agents Without Multiplying Evidence}
\author{Marc Bara\\Universitat Oberta de Catalunya (UOC), Barcelona, Spain\\\texttt{mbara@uoc.edu}}
\date{August 2026}

\begin{document}

\maketitle

\begin{abstract}
Multi-agent AI systems improve inference by spawning agents and synthesizing reports. But another agent is not another observation: apparently independent reports may descend from the same evidence, and genuinely independent evidence can produce nearly identical reports. We formalize this as an epistemic Sybil problem. A report $Z$ is an epistemic Sybil extension relative to reports $R$ when $I(\Theta;Z\mid R)=0$. No report-only aggregator can generally distinguish replication from independent corroboration: identical reports can warrant different posteriors under unobserved ancestry. A Gaussian shared-root model shows common ancestry does not imply complete redundancy. Repeated extraction adds information toward a source-level ceiling, and correlated extraction errors, which a shared base model can induce among independent agents, lower that ceiling further. We test these predictions with more than 20,000 controlled LLM-agent report and extraction calls on synthetic evidentiary documents. Holding one evidence root fixed while report multiplicity rises from 1 to 32 collapses naive posterior coverage from 0.940 to 0.263. Holding report count fixed while evidence-root multiplicity rises from 1 to 16 closes the gap, and the aggregators are statistically indistinguishable at $k=16$. The agent's replicate extraction errors are correlated ($\hat\gamma_{\mathrm{cal}}=0.719$, estimated out of sample), and a correlated-extraction aggregator restores calibration accordingly. A controlled manipulation isolates representation similarity from evidential ancestry. It changes a report-space deduplication mechanism's mean inferred cluster count by $1.425$ ($95\%$ CI $[1.363,1.485]$), whereas a fourfold change in true ancestry changes it by only $0.040$ ($[-0.045,0.120]$). Collective inference should therefore track evidential ancestry and dependence, not agent or report multiplicity or similarity.
\end{abstract}

\section{Introduction}

Suppose three analysts predict that a firm will default. One has examined its audited accounts, another has access to payment data from its banks, and a third has independently interviewed major suppliers. Now consider three other analysts who produce different and detailed arguments but ultimately derive them from the same credit-rating report.

Both situations contain three reports and may generate the same vote or confidence level. They nevertheless represent different information structures.

The distinction matters increasingly because generating reports has become cheap, and increasingly because the inference system itself can now generate them. An orchestrator can send one source to several agents, pass one agent's output into another agent's input, expose several agents to overlapping retrieval results, or combine outputs from repeated instances of a shared base model; a single source can be summarized, translated, critiqued, or reformulated before being re-aggregated, returning to the system in forms that appear unrelated. Conversely, genuinely independent evidence may produce near-identical reports. Observable diversity and evidential diversity therefore need not coincide.

Many classical fusion models treat dependence as a property of pre-existing sources, although distributed-estimation and data-incest settings already show that inference architectures can also create dependence through retransmission. Generative agent systems amplify this second regime: spawning an agent, branching a retrieval, or re-prompting a shared model are actions the system itself takes, not properties of a fixed external source population, so an agentic system is not merely aggregating under dependence; its own orchestration can create the dependence that it subsequently mistakes for corroboration. Agent multiplicity, report multiplicity, and evidence-root multiplicity are consequently different quantities, and the central question is when an additional agent's report constitutes new evidence rather than merely a new voice. We formalize its marginal evidential contribution as the conditional information it adds about the latent state.

Dependence among information sources is itself a long-studied problem in forecast combination, distributed estimation, and social learning; Section~\ref{sec:related-work} situates this paper relative to that literature and to recent multi-agent AI systems that explicitly trace or discount evidence ancestry. The problem addressed here is the architectural form this takes in collective AI inference: when a system can cheaply multiply agents, reasoning paths, and reports, what quantity should govern evidential accumulation, and what must an aggregator know to prevent apparent corroboration from growing faster than the evidence that supports it? We treat marginal information about the latent state, rather than agent or report multiplicity, as the quantity that should remain invariant under such replication. The question is native to autonomous agents and multi-agent systems rather than merely illustrated by them: belief aggregation, trust and reputation mechanisms, and human-agent collectives each presuppose an answer to when one more agent's report warrants more collective confidence, and generative orchestration now forces that presupposition to be made explicit and enforced by the architecture itself.

Let $\Theta$ denote a latent state and let
\[
R=(R_1,\ldots,R_n)
\]
denote reports already admitted by an inference system. We call an additional report $Z$ an epistemic Sybil extension when
\[
I(\Theta;Z\mid R)=0.
\]
The report may have a different identity, use different language, present a different argument, or pass through an independent computational process. The defining property is that, conditional on what has already been admitted, it contributes no additional information about $\Theta$.

This reframes Sybil resistance. Classical Sybil attacks multiply identities. An epistemic Sybil multiplies apparent evidential support without multiplying information about the state. The phenomenon can be adversarial, but it can also arise through honest retransmission.

Two questions organize the rest of the paper. First, can report content alone identify evidential ancestry well enough to enforce this invariant? Section~\ref{sec:non-identifiability} shows it cannot in general. Second, when reports do share a root, how much information does that root actually contain, and how does that change when the reports' extraction errors are themselves correlated, as they can be for agents sharing a base model? Section~\ref{sec:shared-roots} answers both questions in a Gaussian model. Section~\ref{sec:provenance} then asks what a provenance interface can and cannot certify. Sections~\ref{sec:simulation} and~\ref{sec:empirical-study} test these results computationally and empirically: a synthetic Monte Carlo study validates the analytical model under a dependence structure imposed by construction, and more than 20,000 calls to a real language-model agent show the same qualitative failure, and its correction, under dependence that is measured rather than assumed, including a controlled design that separates representation similarity from evidential ancestry directly.

The paper makes four contributions.
\begin{enumerate}
\item It introduces epistemic Sybil resistance (Definition~\ref{def:esr-interface}) as a criterion for collective inference and identifies the protected quantity as marginal conditional information about the latent state, rather than agent identity, report count, or report similarity. The formulation makes explicit that evidential contribution is meaningful only relative to the information available to the aggregator.
\item It establishes a report-only identification barrier: the same observable report profile can correspond to replicated ancestry or independent corroboration and require different Bayes-optimal updates, while a complementary no-minting result bounds the information that descendants of fixed evidence can collectively contain.
\item It derives how information accumulates between the extremes of exact replication and independent evidence. Repeated extraction from a shared root adds information only toward a finite ceiling, and correlated extraction errors of the kind shared models can induce lower that ceiling further, separating evidential ancestry from the dependence introduced by the extraction process itself.
\item It tests these distinctions in more than 20,000 calls to a real language-model agent. At fixed ancestry, multiplying reports produces severe overconfidence under independence-assuming aggregation; increasing the number of genuinely independent roots removes that failure; replicate extraction errors exhibit substantial residual correlation; and a controlled $2\times2$ design shows that a concrete report-space deduplication mechanism responds strongly to representation while barely responding to the evidential ancestry it is intended to recover.
\end{enumerate}

Taken together, these contributions turn a classical dependence problem into an explicit design problem for collective AI inference. Generative orchestration can transform one evidential lineage into many agents, reports, reasoning paths, and representations at negligible marginal cost, making nominal multiplicity an increasingly unreliable proxy for evidential multiplicity. The contribution here is to make that distinction operational: agent, report, and evidence-root multiplicity are separated explicitly; marginal evidential contribution is defined through conditional information; the consequences of hiding ancestry and dependence from the aggregator are isolated formally; shared-root dependence is distinguished from correlated extraction; and the resulting predictions are tested under controlled report and ancestry manipulations with real language-model calls. The underlying phenomena of dependent evidence are classical; what the framework supplies is a common criterion for deciding when additional AI-generated voices constitute additional evidence.

\section{Related Work}
\label{sec:related-work}

The literature below establishes that dependence among information sources undermines naive aggregation across forecast combination, distributed estimation, social learning, and covariance-based fusion. What generative multi-agent orchestration changes is the scale, cost, and representational flexibility with which an inference system can itself multiply and transform reports.

\subsection{Dependent information aggregation}

Dependence among information sources has long been recognized as central to forecast combination, and further back, to the epistemology of testimony and instrumentation: Olsson (2002) and Bovens and Hartmann (2002) study probabilistically when agreement among witnesses or repeated instrument readings constitutes genuine corroboration, placing the same independence question within formal epistemology. Closer still to the causal structure used throughout this paper, Dietrich and List (2004) model jury decisions in which each juror's vote descends not from the true state itself but from a shared body of evidence, precisely the $\Theta\rightarrow E\rightarrow(R_1,\ldots,R_n)$ structure of Section~\ref{sec:esr-reports}; they show that once this shared root is acknowledged, the classical Condorcet jury theorem's guarantee of near-certain collective correctness as the jury grows can fail, and evidence can mislead even maximally competent jurors. Winkler (1981) studies the combination of probability distributions derived from dependent information sources, and Clemen and Winkler (1999, 2007) develop this into a general treatment of dependent expert aggregation: shared backgrounds, methods, or data sources among experts produce redundant rather than additional information, dependence must be modeled as an input to aggregation separate from each expert's marginal reliability, and experts who are informationally equivalent given a common source contribute little beyond either alone. Kuncheva, Whitaker, Shipp, and Duin (2003) document a parallel effect in classifier fusion: pools of classifiers with identical individual accuracy can achieve substantially different majority-vote accuracy under different joint dependence structures, while pairwise diversity measures alone do not uniquely determine ensemble performance. Distributed estimation provides a close precursor to the present problem: when estimates are repeatedly exchanged across a network, information can return to an estimator embedded in another estimate, and treating the returned estimate as independent counts some observations repeatedly, producing overconfident uncertainty (McLaughlin, Evans, and Krishnamurthy, 2003); data integration research studies an analogous problem when sources copy from one another, so that agreement between dependent sources should not be treated as the independent corroboration a naive combination rule would infer from it (Das Sarma, Dong, and Halevy, 2011). Covariance Intersection addresses a related problem when cross-correlations are unknown (Julier and Uhlmann, 1997), and recent work on Overlapping Covariance Intersection extends this to settings with partial structural knowledge of shared correlation (Pedroso, Batista, and Heemels, 2026); Section~\ref{sec:discussion} returns to this connection. Economics provides a behavioral counterpart: DeMarzo, Vayanos, and Zwiebel (2003) show how repeated transmission can create influence unrelated to informational accuracy, Golub and Jackson (2010) characterize when naive social learning nevertheless aggregates information successfully, and Enke and Zimmermann (2019) provide experimental evidence that people systematically neglect correlation and double count information derived from common sources. An epistemic Sybil can be read as a strategic or algorithmic version of this behavioral phenomenon, sharpened by generative systems that reduce the cost of retelling while making common ancestry harder to infer from surface form.

This literature establishes an important novelty boundary: this paper does not claim that dependence among reports, experts, sources, or classifiers undermines naive aggregation, that shared sources create redundant evidence, or that a shared evidential root can defeat classical convergence guarantees for collective judgment (Dietrich and List, 2004); each is already established. Dietrich and List's model, like the jury-theorem literature generally, takes the shared-root structure as given to the analyst. This paper's focus is different: the fact that this structure is not generally observable from reports themselves (Theorem~\ref{thm:non-identifiability}), how information degrades further when the process extracting from an acknowledged root is itself correlated across reports (Section~\ref{sec:correlated-extraction}), and a controlled empirical test of both under real language-model agents.

\subsection{Sybil and clone robustness}

Douceur (2002) introduced the classical Sybil attack as the creation of multiple identities by one participant; false-name manipulation in mechanism design studies the corresponding possibility that one economic actor presents itself as multiple participants. Burnat and Davidson (2026) apply this perspective to machine-learning data attribution, using quotient mechanisms that aggregate over evidence-backed attribution clusters rather than raw identities. Berriaud and Wattenhofer (2026) formalize clone robustness for arbitrary metric spaces, preventing exact or near-duplicate reports from obtaining additional aggregate influence merely through multiplication. The protected resource here is different from either: epistemic Sybil resistance concerns neither identity weight nor payment attribution but marginal information relevant to a belief, and metric clone robustness is related but non-equivalent, since identical reports can result from independent observations while distant reports can be transformations of a single observation. Theorem~\ref{thm:non-identifiability} formalizes why report-space representation alone cannot universally distinguish these cases.

\subsection{Provenance-aware aggregation}

Human-agent collective systems provide an early application-level precedent for the general problem. Ramchurn et al. (2016) build a disaster-response system that explicitly discusses duplicated/common-source information, combines heterogeneous reports using machine-learning methods, and separately tracks provenance across the system to support accountability and downstream decisions. The present paper addresses a different formal question: it supplies a latent-state conditional-information definition of evidential redundancy, a report-only non-identifiability result, a shared-root information ceiling with a correlated-extraction generalization, and a distinction between evidential ancestry and the statistical dependence strength it induces, together with a controlled empirical test separating ancestry from representation similarity, none of which that system formalizes or evaluates directly.

Recent AI research increasingly records evidence lineage. Yan et al. (2026) trace evidence roots while studying misinformation propagation in multi-agent systems; Xu (2026) uses content-addressed lineage as a global side channel in decentralized mesh inference; Louck (2026) binds authority to origin and preserves it through subsequent transformations; Lee and Kim (2026) demonstrate a complementary failure in retrieval-augmented generation, where lexically diverse passages manufacture corroboration while evading near-duplicate filters. Wu (2026) is particularly close to the present statistical problem: sources sharing an upstream origin are placed in dependence blocks, and evidence from a block of size $m$ is discounted using $\kappa_m=1/[1+\rho(m-1)]$, treated as an effective-sample-size approximation. Section~\ref{sec:shared-roots} shows this expression is exact within a homoscedastic Gaussian equicorrelation model, giving the coefficient a direct generative interpretation. Peer-prediction mechanisms raise a related tension: the Mutual Information Paradigm of Kong and Schoenebeck (2019) rewards informative reports using information-theoretic dependence between peers, but correlation between reports can arise either because independent signals are informative about a common state or because one report derives from another, so a mechanism that rewards agreement without identifying its causal origin can reward endogenous dependence. Truthful provenance is a distinct mechanism-design problem from truthful content elicitation; Section~\ref{sec:discussion} returns to it.

\subsection{Information-theoretic connections}

The conditional-information definition used throughout also admits familiar interpretations through Blackwell informativeness and partial information decomposition; these connections are useful for positioning but are not contributions of the paper. If $\Theta\rightarrow E\rightarrow Y$, then $Y$ is a garbling of $E$, and observing $E$ Blackwell-dominates observing $Y$: no decision maker can obtain lower Bayes risk from the garbled observation for any decision problem satisfying the standard comparison-of-experiments conditions (Blackwell, 1953). In a nonnegative two-source partial information decomposition, conditional mutual information can be written as the sum of information unique to $Z$ and information available only synergistically from $R$ and $Z$, so $I(\Theta;Z\mid R)=0$ means an epistemic Sybil contributes neither unique nor synergistic information about $\Theta$ beyond $R$ (Williams and Beer, 2010).

\section{Epistemic Sybil Resistance}
\label{sec:esr}

\subsection{Reports, evidence roots, and marginal information}
\label{sec:esr-reports}

Let $\Theta$ be a latent random variable on state space $\mathcal T$. An inference system receives $R=(R_1,\ldots,R_n)$ and produces a belief $A(R)\in\Delta(\mathcal T)$. A report may originate from a sensor, human, model, database, document, retrieval operation, algorithm, or any composition of them.

Two reports $X$ and $Z$ satisfy the standard conditional-independence model when $X\perp Z\mid\Theta$; agreement between them can then add substantial evidence. They can instead derive from a common evidence variable $E$, so that $\Theta\rightarrow E\rightarrow(X,Z)$. We call such a shared upstream evidence variable $E$ an evidence root. Common ancestry induces dependence, but does not imply that one report is redundant conditional on the other, since different reports can extract different aspects of the same evidence.

The limiting form of redundancy is $\Theta\perp Z\mid R$, equivalently $I(\Theta;Z\mid R)=0$.

\begin{definition}[Epistemic Sybil extension]
Given an admitted report profile $R$, an additional report $Z$ is an epistemic Sybil extension relative to $R$ if
\[
I(\Theta;Z\mid R)=0.
\]
A collection $Z_{1:k}$ is an epistemic Sybil extension if
\[
I(\Theta;Z_{1:k}\mid R)=0.
\]
\end{definition}

The definition is invariant to representation: a Sybil extension need not duplicate an existing string, embedding, identity, argument, or source label. Define its incremental epistemic contribution as $\Gamma(Z;R)=I(\Theta;Z\mid R)$, so $\Gamma(Z;R)=0$ for an epistemic Sybil extension and $\Gamma(Z;R)>0$ for a report containing marginal information about $\Theta$. The quantity is a theoretical target, not assumed directly observable.

\subsection{Side-information interfaces}
\label{sec:interfaces}

Epistemic Sybil resistance cannot be defined independently of what the aggregator is allowed to observe. Let $\mathcal P$ be a class of information-generating models. A side-information interface $\mathcal J$ maps a model $P\in\mathcal P$ and an observed report profile $R$ to side information $S=\mathcal J_P(R)$ available to the aggregation mechanism, ranging from $\mathcal J_0$ (no structural side information) through $\mathcal J_G$ (authenticated provenance information) to $\mathcal J_*$ (the complete relevant joint information structure). An aggregation mechanism relative to $\mathcal J$ is written $A_{\mathcal J}(R,\mathcal J_P(R))$.

\begin{definition}[Epistemic Sybil resistance relative to an interface]
\label{def:esr-interface}
An aggregation mechanism $A_{\mathcal J}$ is epistemic-Sybil-resistant on model class $\mathcal P$ relative to side-information interface $\mathcal J$ if, whenever $P\in\mathcal P$ and an extension $Z$ satisfies
\[
I_P(\Theta;Z\mid R)=0,
\]
then
\[
A_{\mathcal J}\left(R,Z,\mathcal J_P(R,Z)\right)
=
A_{\mathcal J}\left(R,\mathcal J_P(R)\right)
\]
almost surely.
\end{definition}

Resistance is therefore always relative to an information interface: Section~\ref{sec:non-identifiability} establishes a limitation under $\mathcal J_0$, and Section~\ref{sec:provenance} examines what richer interfaces provide. The defining conditional-independence relation also has a sufficiency interpretation: if $I(\Theta;Z\mid R)=0$, then $R$ remains sufficient for $\Theta$ after the observation space is enlarged from $R$ to $(R,Z)$.

In practice, interfaces of increasing strength reveal increasingly specific structure: a report-only interface exposes content but nothing about ancestry; a root interface additionally exposes authenticated root sets $\{\mathcal R(R_i)\}_{i=1}^n$, identifying known shared ancestry without determining the strength of statistical dependence it induces; a structural interface exposes the derivation graph or predicates over it, establishing facts such as ``this report derives only from already-admitted reports'' without identifying unobserved statistical dependencies; and a probabilistic interface augments lineage with an explicit generative dependence model, at which point aggregation reduces to ordinary probabilistic inference under that model. The design relation is therefore provenance structure plus dependence assumptions yields evidential contribution; provenance structure alone should not be interpreted as evidential weight, a point Section~\ref{sec:provenance} develops further.

\section{No-Minting and Report-Only Non-Identifiability}
\label{sec:no-minting-and-nonid}

\subsection{No-minting principle}
\label{sec:no-minting}

Let $E$ represent primitive evidence and let $Y=(Y_1,\ldots,Y_m)$ be any collection of reports generated solely from $E$, so that $\Theta\rightarrow E\rightarrow Y$.

\begin{proposition}[Evidence no-minting]
\[
I(\Theta;Y)\leq I(\Theta;E).
\]
\end{proposition}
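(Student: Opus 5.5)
This is the data processing inequality applied to the Markov chain $\Theta\rightarrow E\rightarrow Y$. The first step is to make the hypothesis precise. "Generated solely from $E$" means that the conditional law of $Y=(Y_1,\ldots,Y_m)$ given $(\Theta,E)$ depends only on $E$. Equivalently, $Y\perp\Theta\mid E$, so $I(\Theta;Y\mid E)=0$. The reports may depend on one another in arbitrary ways, including copying, retransmission, or a shared base model. The joint vector $Y$ is treated as a single random variable, so no structure among the $Y_i$ is needed.

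The core argument expands $I(\Theta;E,Y)$ by the chain rule for mutual information in two orders:
\[
I(\Theta;E,Y)=I(\Theta;E)+I(\Theta;Y\mid E)=I(\Theta;Y)+I(\Theta;E\mid Y).
\]
The Markov condition makes $I(\Theta;Y\mid E)=0$, so the left expansion equals $I(\Theta;E)$. Nonnegativity of conditional mutual information gives $I(\Theta;E\mid Y)\ge 0$. Combining the two expansions yields $I(\Theta;Y)\le I(\Theta;E)$. Equality holds exactly when $I(\Theta;E\mid Y)=0$, that is, when $Y$ is sufficient for $\Theta$ relative to $E$. I would state this equality case as a remark, since it is the quantity the shared-root ceiling of Section~\ref{sec:shared-roots} later approaches.

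The main obstacle is generality. $\Theta$, $E$, and $Y$ may be continuous or mixed, as in the Gaussian model used later. There the chain rule and the nonnegativity of conditional mutual information should be invoked in their measure-theoretic form, with mutual information defined as the supremum over finite partitions or via Radon--Nikodym derivatives. Both identities hold in that generality whenever the quantities are finite. If $I(\Theta;E)=\infty$, the inequality is trivial.

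If the chain rule is unavailable, I would use the partition-supremum definition as a fallback. For any finite quantization $q$ of $Y$, the chain $\Theta\rightarrow E\rightarrow q(Y)$ is still Markov. The discrete data processing inequality then gives $I(\Theta;q(Y))\le I(\Theta;E)$, and taking the supremum over $q$ gives the claim. Finally, I would note that the bound is uniform in $m$, so replication cannot raise collective information above that of the root.
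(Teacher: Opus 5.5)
Your proof is correct and takes the paper's approach: the paper simply cites the data-processing inequality for the chain $\Theta\rightarrow E\rightarrow Y$, noting that Blackwell's garbling result is stronger, and your two-way chain-rule expansion of $I(\Theta;E,Y)$ is the standard derivation of that inequality. The measure-theoretic chain rule already covers the continuous Gaussian case, so you do not need the quantization fallback; as written it would need more care anyway, because $\Theta$ and $E$ remain continuous and a purely discrete data-processing inequality does not apply directly.
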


\begin{proof}
The result follows directly from the data-processing inequality. The corresponding Blackwell result (Blackwell, 1953) is stronger: since $Y$ is a garbling of $E$, observing $E$ is at least as useful as observing $Y$ for every Bayesian decision problem under the standard comparison-of-experiments framework.
\end{proof}

The proposition gives a source-level conservation principle: processing can expose information already contained in evidence, but multiplying its descendants cannot create information about the underlying state that was absent from their ancestry. Importantly, this does not imply that every descendant of $E$ after the first should be discarded. A first report can reveal only part of the information latent in $E$; additional processing can expose more. The total information in all descendants remains bounded by the information present in their ancestry. This distinction separates exact replication from repeated extraction, the subject of Section~\ref{sec:shared-roots}.

\subsection{Report-only non-identifiability}
\label{sec:non-identifiability}

Classical work establishes that dependence is a separate input to Bayesian aggregation and that posterior conclusions can be sensitive to it (Winkler, 1981; Clemen and Winkler, 1999). Marginal source reliabilities alone do not determine the joint conditional likelihood, so the information available to the aggregator matters directly for what posterior can be justified. Theorem~\ref{thm:non-identifiability} turns this dependence sensitivity into a report-only identification result: when dependence or ancestry is hidden from the aggregator, the same observable report profile can require different Bayes-optimal updates, and no rule that sees only report content can supply both.

Let $\Theta\sim\operatorname{Bernoulli}(1/2)$, and let a binary report $X$ satisfy $P(X=\Theta)=p$, $p\in(1/2,1)$. Consider two possible information structures.

Under the \emph{clone structure}, the second report is an exact copy, $Z=X$, so upon observing $X=Z=1$,
\[
P_C(\Theta=1\mid X=1,Z=1)=p.
\]

Under the \emph{independent structure}, instead $X\perp Z\mid\Theta$ and both reports have accuracy $p$, so
\[
P_I(\Theta=1\mid X=1,Z=1)
=
\frac{p^2}{p^2+(1-p)^2}
>p
\qquad(p>1/2).
\]
The observable report profile is identical in the two structures, while the appropriate posterior differs.

\begin{theorem}[Report-only non-identifiability]
\label{thm:non-identifiability}
Consider an aggregation rule operating under $\mathcal J_0$ and observing only $(X,Z)$ together with the common marginal accuracy $p$. No deterministic rule can be Bayes-correct under both the clone and independent information structures.

If its posterior estimate after observing $(1,1)$ is $q$, then
\[
\max\left\{
|q-p|,
\left|
q-\frac{p^2}{p^2+(1-p)^2}
\right|
\right\}
\geq
\delta(p),
\]
where
\[
\delta(p)
=
\frac{p(1-p)(2p-1)}
{2[p^2+(1-p)^2]}.
\]
The same lower bound holds for randomized rules in expected absolute error.
\end{theorem}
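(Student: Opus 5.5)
The argument has three parts: the identical observable profile and the two distinct targets, a midpoint lower bound for deterministic rules, and an extension to randomized rules.

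\emph{Same input, different targets.} Under $\mathcal J_0$ the rule sees only $(X,Z)=(1,1)$ and $p$. Both structures give this profile positive probability: under the clone structure it occurs with probability $1/2$, and under the independent structure with probability $\tfrac12[p^2+(1-p)^2]>0$. A deterministic rule therefore outputs a single number $q=q(1,1,p)$ in both worlds. The Bayes-correct posteriors in the two worlds are $a=p$ and $b=p^2/(p^2+(1-p)^2)$, as computed above the statement. Since $b>a$ for $p\in(1/2,1)$, no single $q$ equals both, so the rule cannot be Bayes-correct under both structures.

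\emph{Deterministic bound.} For any real $q$, the triangle inequality gives $|q-a|+|q-b|\ge |b-a|$, so
\[
\max\{|q-a|,|q-b|\}\ge \tfrac12|b-a|.
\]
It then suffices to compute
\[
b-a=\frac{p^2-p[p^2+(1-p)^2]}{p^2+(1-p)^2}.
\]
The numerator factors as $p[p-p^2-(1-p)^2]=p(1-p)(2p-1)$. Hence $\tfrac12(b-a)=\delta(p)$. The bound is tight at the midpoint $q=(a+b)/2$, though tightness is not required.

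\emph{Randomized rules.} A randomized rule outputs a random $Q$ whose law depends only on the observed $(1,1,p)$, and this law is the same in both worlds. For each realization, the pointwise inequality $|Q-a|+|Q-b|\ge b-a$ holds. Taking expectations gives $\mathbb E|Q-a|+\mathbb E|Q-b|\ge b-a$, so $\max\{\mathbb E|Q-a|,\mathbb E|Q-b|\}\ge\delta(p)$.

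The only real care point is justifying that the rule's output distribution cannot depend on the structure. This is exactly the content of operating under $\mathcal J_0$: the interface supplies no side information distinguishing the two models, and the marginal accuracy $p$ is common to both. Everything else is algebra.
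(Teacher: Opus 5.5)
Your proposal is correct and follows the paper's proof essentially step for step: both use the fact that the observable input is identical, so the output (or its law) is forced to be common, and both apply the half-gap bound from the triangle inequality. The algebra $q_I-q_C=p(1-p)(2p-1)/[p^2+(1-p)^2]$ and the expectation argument $\mathbb E|Q-q_C|+\mathbb E|Q-q_I|\ge|q_I-q_C|$ for randomized rules also match the paper's.
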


The mechanism in Theorem~\ref{thm:non-identifiability} is deliberately granted knowledge of $p$; withholding the marginal reliability would only make identification more difficult. For $p=0.7$, $q_C=0.700$, $q_I\approx0.845$, and $\delta(0.7)\approx0.072$: a gap of that size is unavoidable for any single deterministic response to the same observable input. The proof, a short computation of the Bayes-optimal posteriors in each structure and the resulting bound, is given in Appendix~\ref{app:proof-nonidentifiability}.

\begin{corollary}[Incompatibility under the trivial interface]
Over a model class containing both clone and independent-corroboration structures, no rule operating under $\mathcal J_0$ can guarantee both epistemic-Sybil invariance and Bayes-correct responsiveness to independent corroboration.
\end{corollary}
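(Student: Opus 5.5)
The corollary is a direct consequence of Theorem~\ref{thm:non-identifiability}, applied to the clone and independent structures already constructed. I would argue by contradiction. Suppose a rule $A_{\mathcal J_0}$ were both epistemic-Sybil-invariant and Bayes-correct under independent corroboration, uniformly over a model class containing both structures.

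The first step is to check that the clone structure really produces an epistemic Sybil extension. Under $Z=X$, $Z$ is a deterministic function of the admitted report $R=X$. Hence $I_{P_C}(\Theta;Z\mid X)=H(Z\mid X)-H(Z\mid X,\Theta)=0$. Definition~\ref{def:esr-interface} then forces
\[
A_{\mathcal J_0}(X,Z)=A_{\mathcal J_0}(X)
\]
almost surely. Under $\mathcal J_0$ the side information carries no structural content, so the rule's output is a function of report content (and the known $p$) alone.

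The second step evaluates this on the event $X=1$, which has positive probability. By Bayes-correctness for a single report of accuracy $p$, $A_{\mathcal J_0}(1)$ must equal $p$. Sybil invariance therefore gives $A_{\mathcal J_0}(1,1)=p$.

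The third step uses the independent structure. The profile $(1,1)$ has positive probability there too. Bayes-correct responsiveness requires $A_{\mathcal J_0}(1,1)=p^2/[p^2+(1-p)^2]$, which is strictly greater than $p$ for $p\in(1/2,1)$. The same deterministic function cannot take both values on an input it cannot tell apart. Quantitatively, Theorem~\ref{thm:non-identifiability} shows that any single response $q$ misses one of the two targets by at least $\delta(p)>0$. The randomized clause of that theorem covers rules with internal randomness, in expected absolute error.

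The main obstacle is the interpretive step linking the corollary's informal terms to the formal definitions. ``Bayes-correct responsiveness'' should mean that the output equals the true posterior almost surely under each model in the class. The invariance in Definition~\ref{def:esr-interface} must also be read as holding for the clone model. One could worry that invariance only pins $A(1,1)$ to $A(1)$ without fixing $A(1)$ at $p$. I would handle this by making explicit that single-report Bayes-correctness is part of the responsiveness requirement: $A(1)=p$ is the posterior in both structures, so it is common to both models. Alternatively, I would note that the contradiction needs only $A(1,1)=A(1)$ together with $A(1,1)=p^2/[p^2+(1-p)^2]$ and $A(1)=p$.
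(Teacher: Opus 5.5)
Your argument is correct and is essentially the paper's route: the corollary is read off the clone/independent pair of Theorem~\ref{thm:non-identifiability}. Sybil invariance on the clone structure forces $A(1,1)=A(1)=p$, while the independent structure demands $p^2/[p^2+(1-p)^2]$ on the same observable profile. Your explicit use of single-report Bayes-correctness to pin $A(1)=p$ is a bridge the paper leaves implicit, and it is genuinely needed, as you noted.
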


The result does not imply that semantic similarity is useless. Similarity can be statistically informative about common ancestry. It cannot, however, provide a universally valid identification criterion; Section~\ref{sec:emp-2x2} tests this directly for one concrete report-space mechanism.

Theorem~\ref{thm:non-identifiability} isolates the report-only identification barrier relevant to epistemic Sybil resistance: once the dependence structure is hidden, correct aggregation cannot in general be recovered from report content and marginal reliability alone. This is a different result from ensemble-dependence findings such as Kuncheva, Whitaker, Shipp, and Duin (2003), which study how joint correctness dependence among classifiers changes realized ensemble accuracy for a fixed decision rule. Theorem~\ref{thm:non-identifiability} instead fixes the observable report profile and varies the hidden information structure, showing why additional structural information is required to distinguish replicated support from independent corroboration.

\section{Information Aggregation under Shared Roots}
\label{sec:shared-roots}

The opposite error to Section~\ref{sec:non-identifiability}'s is to conclude that every set of reports sharing one evidence root should count as exactly one observation.

\subsection{Shared-root model}
\label{sec:shared-root-model}

Suppose
\[
E=\Theta+\varepsilon,
\qquad
\varepsilon\sim\mathcal N(0,\sigma^2),
\]
and $m$ reports independently extract information from $E$:
\[
R_i=E+\eta_i,
\qquad
\eta_i\sim\mathcal N(0,\nu^2).
\]
Initially assume that the $\eta_i$ are mutually independent and independent of $\varepsilon$ and $\Theta$. Conditional on $\Theta$, $R\sim\mathcal N(\Theta\mathbf 1,\Sigma_m)$ with $\Sigma_m=\sigma^2\mathbf 1\mathbf 1^\top+\nu^2I$, and the likelihood precision about $\Theta$ is $J_m=\mathbf 1^\top\Sigma_m^{-1}\mathbf 1$.

\begin{proposition}[Shared-root saturation]
\label{prop:saturation}
For $\nu^2>0$,
\[
J_m
=
\frac{m}{\nu^2+m\sigma^2}
=
\frac{1}{\sigma^2+\nu^2/m}.
\]
Hence
\[
\lim_{m\rightarrow\infty}J_m
=
\frac{1}{\sigma^2}.
\]
The marginal precision generated by the $m$-th report is
\[
J_m-J_{m-1}
=
\frac{\nu^2}
{(\nu^2+m\sigma^2)[\nu^2+(m-1)\sigma^2]},
\]
which is positive and of order $O(m^{-2})$.
\end{proposition}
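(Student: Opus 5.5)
The plan is to compute $J_m=\mathbf 1^\top\Sigma_m^{-1}\mathbf 1$ in closed form from the Sherman--Morrison formula applied to the rank-one perturbation $\Sigma_m=\nu^2 I+\sigma^2\mathbf 1\mathbf 1^\top$. The hypothesis $\nu^2>0$ guarantees that $\nu^2 I$ is invertible. Sherman--Morrison then gives
\[
\Sigma_m^{-1}=\frac{1}{\nu^2}\Bigl(I-\frac{\sigma^2\mathbf 1\mathbf 1^\top}{\nu^2+m\sigma^2}\Bigr),
\]
using $\mathbf 1^\top\mathbf 1=m$. Contracting with $\mathbf 1$ on both sides yields
\[
J_m=\frac{1}{\nu^2}\Bigl(m-\frac{m^2\sigma^2}{\nu^2+m\sigma^2}\Bigr)=\frac{m}{\nu^2+m\sigma^2}.
\]
Dividing numerator and denominator by $m$ gives the second form, $1/(\sigma^2+\nu^2/m)$.

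As a cross-check, I would also derive this without the matrix identity. Conditional on $\Theta$, the sample mean $\bar R=E+\bar\eta$ is sufficient for $\Theta$ within the location family with this equicorrelated covariance, and its variance is $\sigma^2+\nu^2/m$. The precision about $\Theta$ is therefore the reciprocal of that variance. The sufficiency claim holds because $\mathbf 1$ is an eigenvector of $\Sigma_m$: then $\Sigma_m^{-1}\mathbf 1\propto\mathbf 1$, so the GLS estimator coincides with the sample mean.

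The limit is immediate from the second form, since $\nu^2/m\to0$ and $\sigma^2+\nu^2/m\to\sigma^2$. If $\sigma^2=0$ the ceiling is infinite, which the formula $1/\sigma^2$ reads correctly. I would note this interpretation: the limit is exactly the precision of observing $E$ itself, in line with the no-minting Proposition.

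For the increment, I would write $a_m=\nu^2+m\sigma^2$ and put both terms over the common denominator $a_m a_{m-1}$. The numerator is $m a_{m-1}-(m-1)a_m$. Expanding,
\[
m\nu^2+m(m-1)\sigma^2-(m-1)\nu^2-m(m-1)\sigma^2=\nu^2,
\]
which gives the stated expression. For $m=1$ the formula is consistent with $J_0=0$.

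Positivity follows because the numerator is $\nu^2>0$ and both denominator factors are positive. The order $O(m^{-2})$ follows because, when $\sigma^2>0$, the denominator is at least $(m-1)^2\sigma^4$, giving a bound of $\nu^2/((m-1)^2\sigma^4)$ for $m\ge2$. I would remark that the rate degenerates to order $1$ when $\sigma^2=0$, which matches the independent-extraction case $J_m=m/\nu^2$. The $O(m^{-2})$ claim should therefore be read with $\sigma^2>0$ held fixed.

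There is no real obstacle here. The only step requiring care is the sign and cancellation in the increment numerator, and making explicit the implicit assumption $\sigma^2>0$ in the rate statement.
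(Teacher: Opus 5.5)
Your proposal is correct and follows the paper's route: Sherman--Morrison applied to $\nu^2 I+\sigma^2\mathbf 1\mathbf 1^\top$ gives the same inverse and hence $J_m=m/(\nu^2+m\sigma^2)$. The limit and the increment then follow by the algebra you write out, which the paper only says ``follow directly'', and your caveat that the finite ceiling and the $O(m^{-2})$ rate require $\sigma^2>0$ makes explicit an assumption the paper leaves unstated.
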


\begin{proof}
By the Sherman--Morrison identity (Sherman and Morrison, 1950),
\[
\Sigma_m^{-1}
=
\frac{1}{\nu^2}I
-
\frac{\sigma^2}
{\nu^2(\nu^2+m\sigma^2)}
\,\mathbf 1\mathbf 1^\top.
\]
Consequently, $J_m=\mathbf 1^\top\Sigma_m^{-1}\mathbf 1=m/(\nu^2+m\sigma^2)$. The remaining expressions follow directly.
\end{proof}

\subsection{Scaling regimes and effective evidence}
\label{sec:scaling-regimes}

The model separates three cases. For exact replication, $\nu^2=0$, so every report is the same observation $E$ and $J_m=1/\sigma^2$: repetition adds no information. For repeated extraction with $\nu^2>0$, additional reports reduce extraction noise, but their contribution saturates because the common error $\varepsilon$ cannot be averaged away. If each report instead has an independent root, $E_i=\Theta+\varepsilon_i$ with independent $\varepsilon_i$, then $J_m^{\mathrm{ind}}=m/(\sigma^2+\nu^2)$: precision grows linearly with report count. Exact replication, shared-root extraction, and independent corroboration therefore have fundamentally different information scaling.

Let $v=\sigma^2+\nu^2$ be the conditional variance of an individual report. The intraclass correlation conditional on $\Theta$ is $\rho=\sigma^2/(\sigma^2+\nu^2)$. Since $\nu^2+m\sigma^2=v[1+\rho(m-1)]$, Proposition~\ref{prop:saturation} can be written as
\[
J_m
=
\frac{m}{1+\rho(m-1)}
\,\frac{1}{v}.
\]
Define $m_{\mathrm{eff}}=m/[1+\rho(m-1)]$, the classical survey-sampling design effect (Kish, 1965) applied here to a likelihood rather than a variance of a sample mean: the $m$ correlated reports contain exactly the same likelihood precision as $m_{\mathrm{eff}}$ independent reports with variance $v$.

\begin{corollary}[Exact effective-sample-size discount under Gaussian equicorrelation]
\label{cor:m-eff}
Within the shared-root Gaussian model,
\[
\kappa_m
=
\frac{1}{1+\rho(m-1)}
\]
is exactly $m_{\mathrm{eff}}/m$.
\end{corollary}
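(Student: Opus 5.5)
The plan is to derive the corollary directly from Proposition~\ref{prop:saturation} by rewriting its closed form for $J_m$ in terms of the per-report variance $v=\sigma^2+\nu^2$ and the intraclass correlation $\rho=\sigma^2/v$. The benchmark for ``effective sample size'' is the precision of $n$ conditionally independent reports, each with variance $v$. The corollary then amounts to checking two things. First, the shared-root precision $J_m$ equals the independent-report precision at $n=m_{\mathrm{eff}}$. Second, the resulting ratio $m_{\mathrm{eff}}/m$ coincides with $\kappa_m$.

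\textbf{Step 1: the independent benchmark.} I would fix what precision $n$ independent reports carry. If $R_i=\Theta+\xi_i$ with independent $\xi_i\sim\mathcal N(0,v)$, then $\Sigma=vI$. Hence the precision is $\mathbf 1^\top\Sigma^{-1}\mathbf 1=n/v$, which grows linearly in $n$. This matches $J_m^{\mathrm{ind}}$ in Section~\ref{sec:scaling-regimes}.

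\textbf{Step 2: rewrite $J_m$.} By Proposition~\ref{prop:saturation}, $J_m=m/(\nu^2+m\sigma^2)$. Substituting $\sigma^2=\rho v$ and $\nu^2=(1-\rho)v$ gives
\[
\nu^2+m\sigma^2=v\bigl[(1-\rho)+m\rho\bigr]=v\bigl[1+\rho(m-1)\bigr].
\]
Therefore $J_m=\dfrac{m}{1+\rho(m-1)}\cdot\dfrac1v=m_{\mathrm{eff}}/v$.

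\textbf{Step 3: compare and take the ratio.} Comparing with Step 1, $J_m$ is exactly the precision of $m_{\mathrm{eff}}$ independent variance-$v$ reports. Dividing by $m$ gives $m_{\mathrm{eff}}/m=1/[1+\rho(m-1)]=\kappa_m$, as claimed.

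\textbf{Edge cases and the main obstacle.} The edge cases need a brief check. When $\nu^2=0$ we have $\rho=1$, so $\kappa_m=1/m$ and $m_{\mathrm{eff}}=1$; this agrees with $J_m=1/\sigma^2$ from the exact-replication regime, even though Proposition~\ref{prop:saturation} assumed $\nu^2>0$. When $\sigma^2=0$ we have $\rho=0$, giving $\kappa_m=1$ and linear growth. The main obstacle is not the algebra but making ``exact'' precise. The identity is an equality of Fisher or likelihood precisions, which under Gaussianity is a statement about the whole likelihood of $\Theta$. The sufficient statistic is the mean $\bar R$, whose variance is $\sigma^2+\nu^2/m=v[1+\rho(m-1)]/m$. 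So I would add one line noting that the likelihoods agree, not merely that a variance of a sample mean has been matched.
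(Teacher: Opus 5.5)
Your proposal is correct and follows essentially the same route as the paper: substitute $v=\sigma^2+\nu^2$ and $\rho=\sigma^2/v$ into Proposition~\ref{prop:saturation} to get $J_m=m_{\mathrm{eff}}/v$, then compare with the precision $m/v$ of independent variance-$v$ reports. Your additional remarks are accurate refinements of the paper's one-line argument: the degenerate cases $\nu^2=0$ and $\sigma^2=0$, and the observation that $\bar R$ is sufficient, so the whole Gaussian likelihood is matched and not just a sample-mean variance.
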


The dependence discount used by Wu (2026) therefore coincides with the exact precision correction implied by this model. The result is specific to the homoscedastic Gaussian equicorrelation class; applying the same coefficient to arbitrary evidence structures is not generally Bayes-optimal. Within the model, $\rho$ has a direct interpretation: it is the proportion of each report's conditional variance attributable to the common source component.

\subsection{Correlated extraction errors}
\label{sec:correlated-extraction}

Independence among extraction errors can be unrealistic when several AI agents share the same base model. Suppose $\operatorname{Var}(\eta_i)=\nu^2$ and, for $i\neq j$, $\operatorname{Cov}(\eta_i,\eta_j)=\gamma\nu^2$ with $0\leq\gamma<1$. The total conditional variance remains $v=\sigma^2+\nu^2$, while off-diagonal covariance becomes $c=\sigma^2+\gamma\nu^2$, giving intraclass correlation $\rho_{\mathrm{tot}}=(\sigma^2+\gamma\nu^2)/(\sigma^2+\nu^2)$. Hence
\[
J_m
=
\frac{m}
{v[1+\rho_{\mathrm{tot}}(m-1)]},
\]
and when $c>0$,
\[
\lim_{m\rightarrow\infty}J_m
=
\frac{1}{\sigma^2+\gamma\nu^2}.
\]
Shared model error therefore introduces an additional information ceiling: nominal agent multiplicity or prompt diversity cannot remove error components common to all extractors. Section~\ref{sec:emp-gamma} estimates $\gamma$ directly from a real language-model agent's replicate extractions and finds this ceiling is not merely a theoretical possibility.

\subsection{Mutual information and the information ceiling}
\label{sec:mutual-info}

If $\Theta\sim\mathcal N(0,\tau^2)$, then $I(\Theta;R_1,\ldots,R_m)=\frac12\log(1+\tau^2J_m)$. For independent extraction from one shared root,
\[
I_{\mathrm{shared}}(m)
=
\frac12
\log\left(
1+
\frac{\tau^2m}{\nu^2+m\sigma^2}
\right),
\]
so
\[
\lim_{m\rightarrow\infty}
I_{\mathrm{shared}}(m)
=
\frac12
\log\left(
1+\frac{\tau^2}{\sigma^2}
\right),
\]
the information available from direct observation of the primitive source $E$.

\section{Provenance as Side Information}
\label{sec:provenance}

Theorem~\ref{thm:non-identifiability} implies that the information-generating structure cannot generally be recovered from report content alone. A richer interface can reveal relevant parts of that structure, but the following limits bound what it can establish.

\subsection{What provenance identifies}
\label{sec:provenance-identifies}

Let $G=(V,E_G)$ be a directed acyclic provenance graph, in the spirit of provenance-as-dependency formalizations such as Cheney, Ahmed, and Acar (2011), whose nodes can represent primitive observations, datasets, documents, transformations, retrieval operations, models, agents, and reports, and whose edges $u\rightarrow v$ record that information from $u$ was available in generating $v$. For report $R_i$, let $\mathcal R(R_i)$ denote its authenticated primitive root set.

Suppose the provenance model establishes $Z=f(R,U)$ with $U\perp\Theta\mid R$. Then $\Theta\perp Z\mid R$ and $Z$ is an epistemic Sybil extension.

\begin{proposition}[Posterior invariance]
If $I(\Theta;Z\mid R)=0$, then $P(\Theta\mid R,Z)=P(\Theta\mid R)$ almost surely.
\end{proposition}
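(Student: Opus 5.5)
The plan is to use the standard characterization of conditional mutual information as an expected Kullback--Leibler divergence. I would write
\[
I(\Theta;Z\mid R)
=
\mathbb E_{(R,Z)}\!\left[
D_{\mathrm{KL}}\!\left(P(\Theta\in\cdot\mid R,Z)\,\big\|\,P(\Theta\in\cdot\mid R)\right)
\right],
\]
where the conditional distributions are regular conditional probabilities. This holds on standard Borel spaces, which covers every model used in the paper (finite, Bernoulli, and Gaussian). The hypothesis $I(\Theta;Z\mid R)=0$ says that the expectation of a nonnegative random variable vanishes. Hence the integrand is zero for $P$-almost every $(R,Z)$.

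Next I would invoke Gibbs' inequality. $D_{\mathrm{KL}}(\mu\|\nu)=0$ holds if and only if $\mu=\nu$. So for almost every realization, the posterior measure $P(\Theta\in\cdot\mid R,Z)$ coincides with $P(\Theta\in\cdot\mid R)$. This is exactly the claimed almost-sure equality.

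A second route avoids manipulating KL divergences. One shows that $I(\Theta;Z\mid R)=0$ is equivalent to $\Theta\perp Z\mid R$, since the joint conditional law then factorizes as the product of the conditional marginals. For any bounded measurable $g$, conditional independence gives $\mathbb E[g(\Theta)\mid R,Z]=\mathbb E[g(\Theta)\mid R]$ almost surely. Applying this along a countable determining class of indicators $g=\mathbf 1_B$ upgrades the equality to one between posterior measures off a single null set.

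I expect the main obstacle to be measure-theoretic rather than conceptual. The issue is making precise that ``almost surely'' refers to a single null set that works for all events $B$ at once. This needs a countable $\pi$-system generating the $\sigma$-algebra on $\mathcal T$, together with the existence of regular conditional distributions. I would simply assume $\mathcal T$ is standard Borel and note that the finite and Gaussian cases in the paper are immediate. In the discrete case the argument is just the observation that $P(\theta\mid r,z)=P(\theta\mid r)$ for every $(r,z)$ of positive probability.
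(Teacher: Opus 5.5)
Your proposal is correct, and your second route is exactly the paper's one-line proof: zero conditional mutual information gives $\Theta\perp Z\mid R$, and conditional independence gives the posterior identity. Your primary expected-KL route is the same implication with the step unpacked through Gibbs' inequality, and your standard-Borel and countable-generator remarks supply measure-theoretic detail the paper leaves implicit.
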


\begin{proof}
Zero conditional mutual information implies the required conditional independence, yielding the posterior identity.
\end{proof}

Under a sufficiently informative interface, ordinary Bayesian updating is therefore automatically invariant to certified zero-novelty extensions; the difficult problem is identifying the relevant dependence structure in the first place.

\subsection{What provenance does not identify}
\label{sec:provenance-limits}

Even when $\mathcal R(R_i)\cap\mathcal R(R_j)=\varnothing$, it need not follow that $R_i\perp R_j\mid\Theta$: distinct datasets can sample the same population, independent sensors can share environmental noise, and apparently independent documents can depend on an unrecorded common source. A provenance interface can establish explicit derivation structure; it cannot prove the absence of every latent common cause.

Two further limits are conceptual rather than statistical. Provenance need not be truthful: participants with an incentive to appear independent can suppress or falsify declared ancestry, so a complete mechanism must address truthful provenance reporting alongside dependence-aware aggregation, not assume it away. And provenance need not be public to be useful, or fully disclosable when it is available: parties may need to establish structural facts about ancestry, such as root-set disjointness, without revealing the roots themselves. Section~\ref{sec:discussion} discusses both as open problems rather than developing them further here.

\subsection{Model parameters as evidence}
\label{sec:model-params}

In generative AI systems, the phrase ``no additional external evidence'' is insufficient. Suppose an LLM produces $Z=f(R,W,U)$, where $W$ denotes model parameters. Those parameters encode regularities acquired during training and can contain information about $\Theta$ that is absent from $R$; in general, $I(\Theta;W\mid R)$ need not be zero. A model instructed only to paraphrase a report can therefore fail to be a pure transformation channel if it corrects, supplements, or reinterprets the report using parametric knowledge. To certify zero novelty one needs an assumption such as $\Theta\perp(W,U)\mid R$, or an explicit modeling convention under which the model's parametric state is not treated as additional admissible evidence for the task. Provenance must therefore characterize relevant information access, not merely external document retrieval; the empirical study of Section~\ref{sec:empirical-study} takes this constraint literally, using only synthetic fictional worlds designed so that the latent state is unavailable from pretraining, precisely so that model parameters cannot act as an unaccounted evidence root.

\section{Synthetic Model Validation}
\label{sec:simulation}

Before testing the framework against real agents, we verify that the analytical results of Section~\ref{sec:shared-roots} are correctly implemented, and establish a quantitative benchmark for the calibration failure they predict, under a dependence structure imposed by construction rather than measured.

We simulate $\Theta\sim\mathcal N(0,\tau^2)$ with baseline parameters $\tau^2=\sigma^2=\nu^2=1$, primitive root $E_j=\Theta+\varepsilon_j$, and reports $R_{ij}=E_j+\eta_{ij}$, varying total report count $n\in\{1,2,4,8,16,32\}$ and root count $k\leq n$, with reports allocated as evenly as possible across roots. Each $(n,k)$ cell contains 60,000 Monte Carlo realizations at a fixed seed. A dependence-aware Bayesian aggregator uses the correct shared-root covariance structure; a naive aggregator treats all $n$ reports as conditionally independent observations of variance $\sigma^2+\nu^2$. No parameter is fitted to the simulated outcomes: the exercise evaluates the consequences of the assumed information structures, not a model fit to reproduce them. Coverage is the fraction of realizations in which the true latent state lies within the nominal 95\% credible interval.

Holding $k=1$ while increasing $n$:

\begin{center}
\begin{tabular}{rcccc}
\toprule
Reports $n$ & Bayes coverage & Naive coverage & Bayes NLL & Naive NLL\\
\midrule
1 & 0.949 & 0.949 & 1.217 & 1.217\\
2 & 0.950 & 0.921 & 1.159 & 1.192\\
4 & 0.949 & 0.833 & 1.128 & 1.378\\
8 & 0.950 & 0.686 & 1.099 & 1.999\\
16 & 0.950 & 0.521 & 1.091 & 3.673\\
32 & 0.950 & 0.381 & 1.077 & 7.273\\
\bottomrule
\end{tabular}
\end{center}

\begin{figure}[t]
\centering
\includegraphics[width=.6\linewidth]{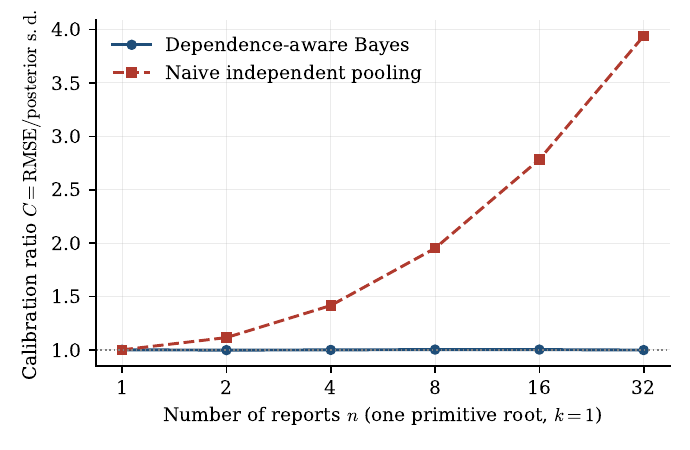}
\caption{Calibration ratio as report count increases with one primitive evidence root. The dependence-aware posterior remains close to one; the independence-assuming posterior becomes progressively overconfident.}
\end{figure}

At $n=32$, naive posterior variance shrinks as though 32 independent observations had arrived: nominal 95\% coverage falls to 38.1\% while the dependence-aware posterior stays near 95\% throughout, and the naive negative log score rises from 1.22 to 7.27 while the dependence-aware score improves slightly. \label{sec:sim-rootcount}The same distinction appears from the other axis: fixing $n=16$ and varying root count $k\in\{1,2,4,8,16\}$, the true information content ranges from 0.332 nats at $k=1$ to 1.099 nats at $k=16$, while a naive aggregator reacting only to $n$ assigns 1.099 nats regardless of $k$, and likelihood precision under one shared root approaches a finite ceiling $1/\sigma^2$ while precision under independent roots keeps growing linearly (Proposition~\ref{prop:saturation}).

The simulation is a validation exercise, not empirical evidence: the dependence structure is imposed, not measured, and it should not be read as a claim that real agents follow the Gaussian model. Its role is to confirm the analytical model is correctly implemented and to provide a quantitative benchmark before Section~\ref{sec:empirical-study} tests the same qualitative predictions against a real language-model agent, where dependence is measured rather than assumed. Additional metrics, the full report-count-versus-root-count and log-score results, and their figures are given in Appendix~\ref{app:simulation-detail}.

\input{draft_empirical_section.tex}

\section{Discussion}
\label{sec:discussion}

Collective inference systems commonly count agents, documents, models, votes, or reports. None of these quantities is generally equivalent to the amount of independent evidence available about a latent state: many agents can carry one informational root, one agent can possess several independent observations, reports that share a root can still contain partially distinct information, and reports with different root labels can remain statistically dependent because of latent common causes. The quantity $I(\Theta;Z\mid R)$ captures the relevant distinction at the probabilistic level, measuring the information about $\Theta$ that becomes available from $Z$ after everything already represented by $R$ is known.

In agentic inference, an orchestrator can branch one source across several agents, pass one agent's report into another agent's input, expose several agents to overlapping retrieval results, or combine outputs from repeated instances of a common base model. Related problems of endogenous retransmission already arise in distributed estimation and data-incest settings, discussed in Section~\ref{sec:related-work}; generative orchestration does not create this dependence problem, but changes its scale, cost, adaptivity, and observability, since one evidential lineage can now be cheaply transformed into many agents, reports, reasoning paths, and superficially diverse representations, so the system can manufacture apparent corroboration faster than it acquires information. Endogenous dependence of this kind is consequently not unique to generative systems; it is already addressed, in a different form, by the distributed-fusion and data-incest literatures cited above, which also study dependence created by the estimation architecture itself rather than inherited from pre-existing source relationships.

The experiments isolate mechanisms directly relevant to these architectures. Section~\ref{sec:emp-multiplicity} shows that report multiplicity at fixed evidential ancestry produces severe overconfidence when reports are treated as independent; in an orchestrated system, agent or message multiplicity can create the same failure when those nominally distinct outputs descend from shared evidence. A report-space deduplication layer, of the kind increasingly proposed as a defense against exactly this failure, can exhibit the representation-tracking failure documented in Section~\ref{sec:emp-2x2}. Provenance-aware aggregation is not a complete fix on its own: Section~\ref{sec:emp-gamma}'s residual miscalibration under a simple shared-root model, and its correction under a correlated-extraction model, show that even reports sharing a known root can have different dependence structures depending on the extraction process, so provenance supplies structural information that report content alone cannot provide, but is not sufficient by itself. It remains a categorically different failure mode from either baseline, and one the theory in Section~\ref{sec:shared-roots} anticipates rather than merely accommodates after the fact. A closely related prediction, that an aggregator with correct ancestry information should outperform a report-only aggregator specifically when one evidential root is retransmitted through a chain of agents, was not tested empirically here and is left for future work alongside the mechanism-design problems below.

The experiments show directly that report multiplicity and representational diversity can rise without corresponding evidential multiplicity: Sections~\ref{sec:emp-multiplicity} and~\ref{sec:emp-2x2} manipulate exactly these two quantities. In orchestrated systems, agent count and message count can become additional nominal multiplicities for the same underlying reason, though this paper does not manipulate them directly. A reliable orchestrator should therefore not treat any of these counts, or their semantic diversity, as a direct proxy for independent evidence. What such an architecture needs to preserve instead is some representation of evidential lineage, of overlap between what different agents actually observed, and of the dependence such overlap implies, close to what Section~\ref{sec:provenance} formalizes as a side-information interface. Correct collective inference, in this sense, is evidence ancestry combined with an appropriate dependence model; neither alone suffices, since ancestry without a dependence model cannot say how much a shared root discounts subsequent reports, and a dependence model without ancestry has nothing to condition on.

This perspective clarifies the relationship among several existing approaches. Classical Sybil resistance protects identity systems against artificial participant multiplicity; clone robustness limits influence created by report duplication or near-duplication; data-incest methods prevent repeated use of information in distributed estimation; Covariance Intersection handles numerical fusion when cross-correlation is unknown; correlation-neglect research studies the behavioral consequences of treating dependent information as independent; and provenance systems expose derivation structure. The epistemic Sybil problem sits at the intersection of these ideas because its protected resource is marginal information about a latent state rather than identities or representations. The Gaussian model additionally shows that a binary duplicate/non-duplicate distinction is insufficient: for equicorrelated reports the effective evidence count $m_{\mathrm{eff}}=m/[1+\rho(m-1)]$ moves continuously, as $\rho$ varies from zero toward one, from independent corroboration toward effective replication. The general problem is therefore not deduplication; it is dependence-aware aggregation under incomplete information about how reports were generated, and Sections~\ref{sec:emp-gamma} and~\ref{sec:emp-2x2} show two concrete ways that incompleteness bites in practice: the wrong dependence model (independent rather than correlated extraction) and the wrong identification signal (representation rather than ancestry).

When dependence is suspected but not fully characterized, robust fusion methods become relevant. Suppose numerical estimates have known marginal covariance matrices but unknown cross-covariance; treating them as independent yields an overconfident posterior, while Covariance Intersection instead chooses a fused representation that remains consistent for the admissible unknown correlations, and Overlapping Covariance Intersection shows that partial structural knowledge of shared correlation can improve on fully correlation-agnostic fusion. This suggests three regimes: known independence permits full accumulation, known dependence calls for model-based aggregation as in Section~\ref{sec:shared-roots}, and unknown dependence calls for conservative aggregation. Evidence provenance can supply exactly the partial structural knowledge such conservative methods exploit; this paper does not propose a replacement for covariance-intersection methods, only the informational invariance criterion they should respect.

The conceptual limits of Section~\ref{sec:provenance-limits} are genuine open problems rather than incidental caveats. If independent evidence receives greater marginal influence, an agent benefits from suppressing ancestry, and a coalition can attempt to manufacture apparently separate roots; if a system rewards agreement, endogenous copying can create precisely the correlation that some information-elicitation mechanisms interpret as evidence of informativeness. A complete mechanism would need to address content reports, provenance reports, and dependence-aware aggregation jointly, seeking conditions under which no participant benefits from misreporting provenance; this remains an open mechanism-design problem. Confidentiality raises a related, largely orthogonal problem: a journalist may wish to establish that two reports originate from separate sources without identifying either source, and firms or agents may need to demonstrate non-overlapping evidential ancestry without exposing proprietary or private material. Cryptographic side-information interfaces, in which participants commit to provenance structures and prove predicates such as $\mathcal R(R_i)\cap\mathcal R(R_j)=\varnothing$ or bounded overlap without disclosing root identities, are one candidate; origin-bound authority and content-addressed lineage already appear in recent agent-security and collective-inference systems, which makes such an interface technically less speculative than the underlying epistemological problem might suggest. Both problems are left for future work.

\section{Limitations}
\label{sec:limitations}

Section~\ref{sec:emp-summary} details limitations specific to the empirical study, including its single task family, single main model, and the representation-manipulation framing of Section~\ref{sec:emp-2x2}; these apply in addition to the following, which concern the framework generally.

First, conditional mutual information is defined relative to a probability distribution. In many real inference problems the joint distribution is unknown. Second, provenance records known derivation, not every latent common cause. Third, model parameters, human background knowledge, and other unobserved states can constitute evidence even when no additional document or sensor is consulted. Fourth, provenance authenticity does not establish the truth of primitive evidence. Fifth, primitive evidence roots can themselves be Sybil-manipulated unless their creation is constrained by domain-specific authentication, resource, hardware, institutional, or identity assumptions. Sixth, the Gaussian model of Section~\ref{sec:shared-roots} is deliberately stylized; its purpose is to distinguish exact duplication, shared-root extraction, and independent corroboration, and to derive closed-form dependence corrections, not to serve as a universal model of natural-language evidence, though Section~\ref{sec:emp-gamma}'s out-of-sample validation suggests it captures a real regularity in at least one concrete setting. Finally, this paper does not provide a universal estimator of $I(\Theta;Z\mid R)$: without structural assumptions or side information, Theorem~\ref{thm:non-identifiability} shows why no universal report-only solution should be expected.

\section{Conclusion}
\label{sec:conclusion}

As agent orchestration makes it cheap to spawn additional reasoning paths, retrievals, and reports, confidence cannot safely scale with the number of apparent contributors: report multiplicity ceases to be a reliable proxy for evidential multiplicity precisely as producing reports becomes cheapest to automate. This paper formalizes the resulting problem through conditional mutual information: an additional report $Z$ is an epistemic Sybil extension relative to admitted reports $R$ when $I(\Theta;Z\mid R)=0$, a definition independent of agent identity and report representation.

Taken together, the theoretical and empirical results support four conclusions. First, the relevant information structure cannot generally be identified from reports alone: Theorem~\ref{thm:non-identifiability} shows that two identical reports can represent either pure replication or genuinely independent corroboration and require different Bayesian treatment, and Section~\ref{sec:emp-2x2} shows a concrete report-space mechanism failing in exactly this way on real data, tracking a representation-only manipulation ($+1.425$) far more than the fourfold change in true ancestry it was meant to detect ($+0.040$). Second, descendants of fixed evidence cannot collectively contain more information about the latent state than their evidential ancestors; report multiplication cannot mint information, by the data-processing inequality. Third, common ancestry does not imply complete redundancy: multiple noisy extractions from one source add genuine information toward a source-level precision ceiling $1/\sigma^2$, with diminishing returns given exactly by the effective-sample-size discount $1/[1+\rho(m-1)]$ under Gaussian equicorrelation, and this ceiling is lower still, $1/(\sigma^2+\gamma\nu^2)$, when extraction errors are correlated across reports, as they measurably are for a real language-model agent extracting from the same document ($\hat\gamma_{\mathrm{cal}}=0.719$, estimated out of sample). Fourth, more than 20,000 calls to a real agent confirm the practical consequence directly: naive aggregation's nominal coverage collapses from 0.940 to 0.263 as report count rises from 1 to 32 at fixed ancestry, and this is not a property of report count as such, since the collapse disappears exactly as ancestry becomes genuinely independent.

These results shift the central question from how many reports agree to how much conditionally novel evidence the reports collectively contain, and from whether reports look alike to whether they were generated independently: this paper's results show these are not the same question.

The next theoretical problem is consequently precise:

\begin{quote}
``What is the weakest side-information interface under which an aggregator can recover, or approximate with bounded loss, the posterior available under the full information-generating structure?''
\end{quote}

A solution would characterize the minimum provenance or dependence information required for reliable collective inference. Truthful provenance, privacy-preserving provenance, and aggregation under partially known dependence, discussed in Section~\ref{sec:discussion}, would then become connected subproblems of that single question rather than separate engineering concerns.

Epistemic Sybil resistance is therefore not a proposal to count sources, or agents, more carefully. It is a proposal to make marginal information, rather than agent multiplicity, report multiplicity, or report similarity, the invariant of collective inference, precisely because an agentic system's own orchestration can now manufacture the first three at negligible cost.

\appendix

\section{Proof of Theorem~\ref{thm:non-identifiability}}
\label{app:proof-nonidentifiability}

The Bayes-optimal posterior values in the two structures of Section~\ref{sec:non-identifiability} are
\[
q_C=p
\]
and
\[
q_I=
\frac{p^2}{p^2+(1-p)^2}.
\]
Because the observable input is the same, a deterministic mechanism must choose the same value $q$ in both cases. Therefore,
\[
\max\{|q-q_C|,|q-q_I|\}
\geq
\frac{|q_I-q_C|}{2}.
\]
Since
\[
q_I-q_C
=
\frac{p(1-p)(2p-1)}
{p^2+(1-p)^2},
\]
the deterministic result follows.

For a randomized output $Q$,
\[
\mathbb E|Q-q_C|
+
\mathbb E|Q-q_I|
\geq
|q_I-q_C|,
\]
so at least one expected absolute error must be at least half the posterior separation.

\section{Synthetic Model Validation: Additional Detail}
\label{app:simulation-detail}

This appendix gives the full metrics, tables, and figures for the simulation of Section~\ref{sec:simulation}, summarized there in compressed form.

\subsection{Metrics}

We evaluate posterior RMSE, posterior standard deviation, nominal 95\% credible-interval coverage, and logarithmic score. Define the calibration ratio
\[
C=
\frac{\operatorname{RMSE}}
{\text{reported posterior standard deviation}}.
\]
A correctly calibrated Gaussian posterior has $C\approx1$; values substantially greater than one indicate overconfidence. For an aggregator with posterior density $p_A$, the negative log-likelihood (negative log score) at the realized state $\theta$ is
\[
\mathrm{NLL}_A(\theta)
=
-\log p_A(\theta).
\]
The realized log-score improvement produced by admitting a report $Z$ is
\[
\Delta\ell
=
\log P(\Theta\mid R,Z)
-
\log P(\Theta\mid R).
\]
When these probabilities are the true posteriors under the data-generating model, $\mathbb E[\Delta\ell]=I(\Theta;Z\mid R)$, an identity that provides an operational interpretation of $\Gamma(Z;R)$. Candidate aggregators are evaluated against this ideal, not assumed to satisfy the identity themselves.

\subsection{Report count versus root count}

Fixing $n=16$ and varying $k$, the information contained in the reports under the correct model is:

\begin{center}
\begin{tabular}{rc}
\toprule
Primitive roots $k$ & $I(\Theta;R)$, nats\\
\midrule
1 & 0.332\\
2 & 0.511\\
4 & 0.718\\
8 & 0.923\\
16 & 1.099\\
\bottomrule
\end{tabular}
\end{center}

The naive aggregator assigns 1.099 nats of implied information in every case because it reacts only to $n=16$, not to evidential ancestry: the same number of reports ranges from 0.332 to 1.099 nats of actual information depending only on how many independent roots generated them.

\begin{figure}[h]
\centering
\includegraphics[width=.6\linewidth]{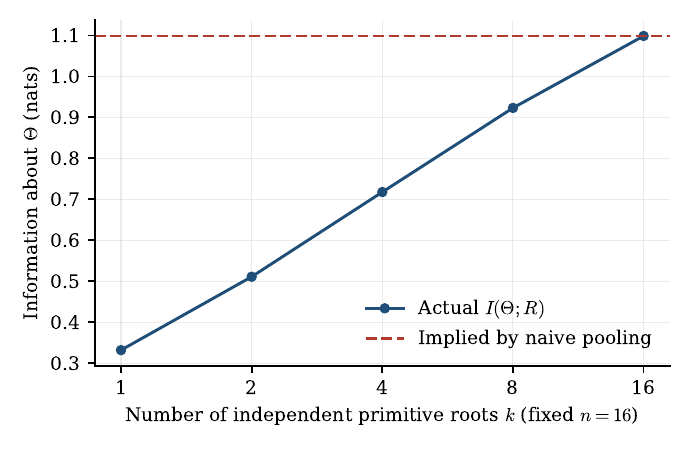}
\caption{Mutual information at fixed $n=16$ as the number of independent roots varies. Report count is constant; evidential information is not.}
\end{figure}

\begin{figure}[h]
\centering
\includegraphics[width=.6\linewidth]{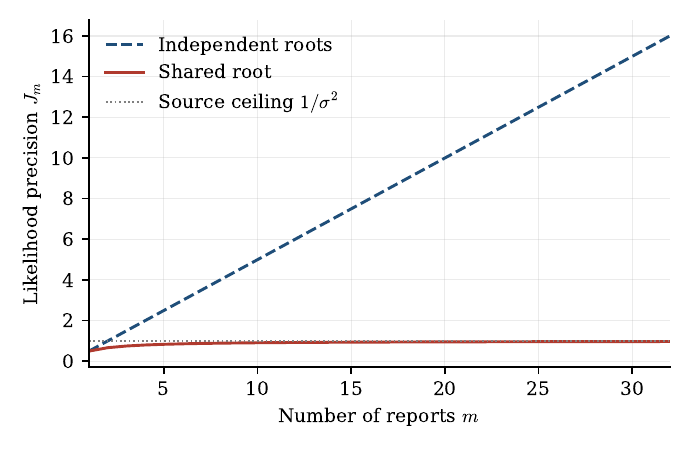}
\caption{Likelihood precision under shared-root extraction and independent-root corroboration. Shared-root precision approaches a finite ceiling, whereas independent-root precision continues to accumulate.}
\end{figure}

\subsection{Log-score deterioration}

Increasing report multiplicity with one fixed root does more than misstate uncertainty. The naive negative log score rises from approximately 1.22 at $n=1$ to 7.27 at $n=32$, while the dependence-aware Bayesian score instead improves slightly as repeated extraction removes part of the extraction noise.

\begin{figure}[h]
\centering
\includegraphics[width=.6\linewidth]{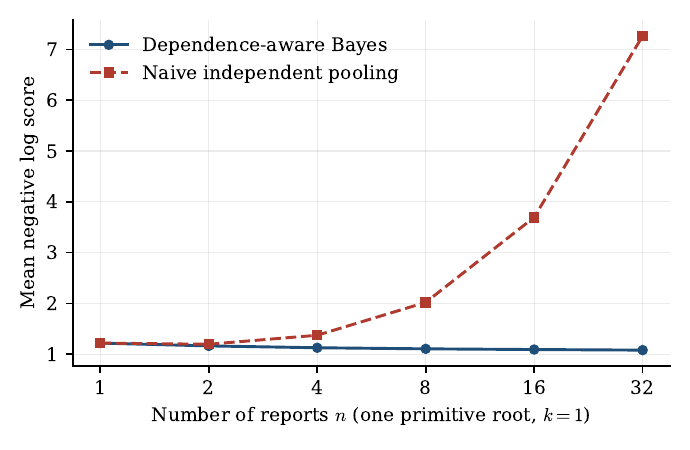}
\caption{Mean negative log score under propagation from a single evidence root. Independence-assuming aggregation deteriorates rapidly as report multiplicity increases.}
\end{figure}

\subsection{Interpretation}

The simulation should not be interpreted as empirical evidence that real LLM agents follow the Gaussian model; the dependence structure is imposed by construction. Its role is narrower. First, it numerically verifies the implementation of the analytical model, the same implementation later used to parameterize the aggregators tested in Section~\ref{sec:empirical-study}. Second, it shows that the difference between report count and evidence-root count can generate large calibration errors even in a minimal setting. Third, it provides quantitative hypotheses for the subsequent empirical experiments, in which report similarity, evidence ancestry, and report count are manipulated on real agents, independently where the design permits.

\section*{References}

\begingroup
\setlength{\parindent}{0pt}
\setlength{\parskip}{5pt}
\newcommand{\bibentry}[1]{\hangindent1.6em\hangafter1\noindent #1\par}

\bibentry{Berriaud, D., and Wattenhofer, R. (2026). Breaking the Illusion of Artificial Consensus: Clone-Robust Weighting for Arbitrary Metric Spaces. arXiv:2602.24024.}

\bibentry{Blackwell, D. (1953). Equivalent comparisons of experiments. Annals of Mathematical Statistics, 24(2), 265--272.}

\bibentry{Bovens, L., and Hartmann, S. (2002). Bayesian Networks and the Problem of Unreliable Instruments. Philosophy of Science, 69(1), 29--72.}

\bibentry{Burnat, F. A. D., and Davidson, B. I. (2026). Quotient Semivalues for False-Name-Resistant Data Attribution. arXiv:2605.07663.}

\bibentry{Cheney, J., Ahmed, A., and Acar, U. A. (2011). Provenance as dependency analysis. \textit{Mathematical Structures in Computer Science}, 21(6), 1301--1337. https://doi.org/10.1017/S0960129511000211.}

\bibentry{Clemen, R. T., and Winkler, R. L. (1999). Combining probability distributions from experts in risk analysis. Risk Analysis, 19(2), 187--203.}

\bibentry{Clemen, R. T., and Winkler, R. L. (2007). Aggregating probability distributions. In W. Edwards, R. F. Miles Jr., and D. von Winterfeldt (Eds.), Advances in Decision Analysis. Cambridge University Press.}

\bibentry{Das Sarma, A., Dong, X. L., and Halevy, A. (2011). Data Integration with Dependent Sources. Proceedings of the 14th International Conference on Extending Database Technology (EDBT), 401--412.}

\bibentry{DeMarzo, P. M., Vayanos, D., and Zwiebel, J. (2003). Persuasion bias, social influence, and unidimensional opinions. Quarterly Journal of Economics, 118(3), 909--968.}

\bibentry{Dietrich, F., and List, C. (2004). A model of jury decisions where all jurors have the same evidence. Synthese, 142(2), 175--202. https://doi.org/10.1007/s11229-004-1276-z.}

\bibentry{Douceur, J. R. (2002). The Sybil attack. In Peer-to-Peer Systems, IPTPS 2002, Lecture Notes in Computer Science 2429, 251--260.}

\bibentry{Enke, B., and Zimmermann, F. (2019). Correlation neglect in belief formation. Review of Economic Studies, 86(1), 313--332.}

\bibentry{Golub, B., and Jackson, M. O. (2010). Naïve learning in social networks and the wisdom of crowds. American Economic Journal: Microeconomics, 2(1), 112--149.}

\bibentry{Julier, S. J., and Uhlmann, J. K. (1997). A non-divergent estimation algorithm in the presence of unknown correlations. Proceedings of the American Control Conference, 2369--2373.}

\bibentry{Kish, L. (1965). Survey Sampling. Wiley.}

\bibentry{Kong, Y., and Schoenebeck, G. (2019). An Information Theoretic Framework for Designing Information Elicitation Mechanisms That Reward Truth-telling. \textit{ACM Transactions on Economics and Computation}, 7(1), Article 2, 1--33. https://doi.org/10.1145/3296670.}

\bibentry{Kuncheva, L. I., Whitaker, C. J., Shipp, C. A., and Duin, R. P. W. (2003). Limits on the Majority Vote Accuracy in Classifier Fusion. Pattern Analysis and Applications, 6(1), 22--31.}

\bibentry{Lee, D., and Kim, J. (2026). A Failure-Mode Benchmark for Polymorphic Sybil Poisoning in RAG. arXiv:2607.03739.}

\bibentry{Louck, Y. (2026). Securing LLM-Agent Long-Term Memory Against Poisoning: Non-Malleable, Origin-Bound Authority with Machine-Checked Guarantees. arXiv:2606.24322.}

\bibentry{McLaughlin, S. P., Evans, R. J., and Krishnamurthy, V. (2003). Data incest removal in a survivable estimation fusion architecture. Proceedings of the Sixth International Conference of Information Fusion.}

\bibentry{Olsson, E. J. (2002). Corroborating Testimony, Probability and Surprise. British Journal for the Philosophy of Science, 53(2), 273--288.}

\bibentry{Pedroso, L., Batista, P., and Heemels, W. P. M. H. (2026). Overlapping Covariance Intersection: Fusion with Partial Structural Knowledge of Correlation from Multiple Sources. arXiv:2603.16768.}

\bibentry{Ramchurn, S. D., Huynh, T. D., Wu, F., Ikuno, Y., Flann, J., Moreau, L., Fischer, J. E., Jiang, W., Rodden, T., Simpson, E., Reece, S., Roberts, S., and Jennings, N. R. (2016). A Disaster Response System based on Human-Agent Collectives. Journal of Artificial Intelligence Research, 57, 661--708.}

\bibentry{Sherman, J., and Morrison, W. J. (1950). Adjustment of an Inverse Matrix Corresponding to a Change in One Element of a Given Matrix. The Annals of Mathematical Statistics, 21(1), 124--127. https://doi.org/10.1214/aoms/1177729893.}

\bibentry{Williams, P. L., and Beer, R. D. (2010). Nonnegative Decomposition of Multivariate Information. arXiv:1004.2515.}

\bibentry{Winkler, R. L. (1981). Combining probability distributions from dependent information sources. Management Science, 27(4), 479--488.}

\bibentry{Wu, Z. (2026). Split the Labor: Separating Evidence Interpretation from Decision Aggregation. arXiv:2608.14509.}

\bibentry{Xu, H. (2026). Mesh Inference: A Formal Model of Collective Inference Without a Center. arXiv:2606.19537.}

\bibentry{Yan, C., Yue, Z., Zhao, F., Lin, E., Jia, L., Tong, H., Lyu, M., Sun, C., and Zeng, Y. (2026). When Truth Is Distributed: Misinformation Derails Collective Fact Recovery in LLM-Based Multi-Agent Systems. arXiv:2608.03421.}

\endgroup

\end{document}

%% file: draft_empirical_section.tex
\section{Empirical Study with LLM Agents}
\label{sec:empirical-study}

The simulation study of Section~\ref{sec:simulation} verifies the
analytical model under an imposed dependence structure. This section tests
the same predictions against real language-model agents, where the
dependence structure is not imposed but measured. Three experiments are
reported. Grid~A and Grid~B are confirmatory: their design, thresholds,
and validity filter were frozen before evaluation data were collected,
following the pilot procedure described below. A correlated-extraction
refinement of the shared-root model is exploratory: the specification was
present in the theoretical model prior to the experiment, but the decision
to evaluate it empirically was prompted by an observed failure of the
simpler independent-extraction specification, so its parameters, though
estimated exclusively on data disjoint from every evaluation set, were
chosen after seeing evaluation behavior. A $2\times2$
similarity-versus-ancestry design is confirmatory with respect to its own
frozen design, which was fixed after a pilot and a disjoint calibration
run and before its evaluation data were collected.

\subsection{Experimental design and calibration}
\label{sec:emp-design}

Agents are a single small language model (Haiku-class) reading short
synthetic memos describing a fictional company's quarterly revenue. Each
memo states three figures: one segment directly, one segment as a
percentage of the first, and one segment as growth over a stated
prior-quarter figure; the agent must combine these to estimate total
revenue $\hat E$, which is never stated directly. Every displayed figure
recombines exactly to the primitive evidence value $E_{wj}$ by
construction, so extraction noise (the extraction error $\eta_i$ of
Section~\ref{sec:shared-root-model}) reflects the agent's behavior rather
than residual rounding. The latent state $\Theta_w$ for each fictional
world is drawn from $\mathcal N(500,100^2)$ truncated to positive values,
and the primitive evidence deviates from $\Theta_w$ by
$\varepsilon\sim\mathcal N(0,\sigma^2)$ with $\sigma=50$.

A disjoint set of 100 calibration worlds, never used for evaluation,
estimates the parameters that evaluation depends on. On this split, the
document-level distortion is $\hat\sigma^2=2518.7$ and the extraction
variance is $\hat\nu^2=5107.8$, giving an intraclass correlation
$\hat\rho=0.330$, comfortably inside the workable band identified by a
pilot run and consistent across independent draws (a pilot estimate on 30
separate worlds gave $\hat\rho$ in the range 0.25 to 0.33). A leakage
control, asking the model to estimate revenue from the company name alone
with no document, found no signal: the no-document estimate's RMSE was 5.1
times the constant-prior baseline's, and the correlation between
no-document estimates and the true state was $0.165$ at the world level
(100 world clusters, each cluster's three no-document calls averaged
before computing the correlation to avoid pseudo-replication), below the
pre-specified threshold of 0.20.

Two aggregators are compared throughout. The naive aggregator pools $n$
reports as conditionally independent observations of variance
$\hat\sigma_r^2=\mathrm{Var}(R-\Theta)=7718.4$, estimated on the same
calibration split. The provenance-aware aggregator uses the exact
shared-root precision of Proposition~\ref{prop:saturation},
$J_m=m/(\hat\nu^2+m\hat\sigma^2)$, combined additively across roots when
more than one is present. A report-space deduplication baseline embeds
each report's rationale text, clusters by a cosine-similarity threshold,
and pools cluster means with the naive method; its threshold was selected
on calibration data by minimizing mean negative log score and frozen
before evaluation.

\subsection{Report multiplicity without evidence multiplicity}
\label{sec:emp-multiplicity}

The first question is simple: does multiplying reports from the same
evidence make the system more confident than it should be? Holding
primitive evidence fixed while increasing the number of derived
reports should increase confidence faster than predictive performance in
aggregators that assume independence, the central prediction tested here;
this report-multiplicity design is Grid~A, the first of the two
confirmatory grids. Across 300 evaluation worlds, with
$n\in\{1,2,4,8,16,32\}$ reports drawn from a single root's report pool and
nested so that each smaller $n$ is a prefix of the next, naive coverage of
a nominal 95\% interval falls from 0.940 at $n=1$ to 0.263 at $n=32$, and
the calibration ratio $C=\mathrm{RMSE}/\text{posterior s.d.}$ rises from
0.961 to 5.473. The provenance-aware aggregator's coverage instead stays
within a much narrower 0.850 to 0.940 band across the same range, and its
calibration ratio stays under 1.46 throughout. Naive mean negative log
score more than triples over this range, from 5.57 to 18.63, while the
provenance-aware score is nearly flat, from 5.57 to 5.81. Figure~\ref{fig:emp-calibration}
shows the calibration ratio and Figure~\ref{fig:emp-logscore-n} the log
score across $n$.

\begin{figure}[t]
\centering
\includegraphics[width=.68\linewidth]{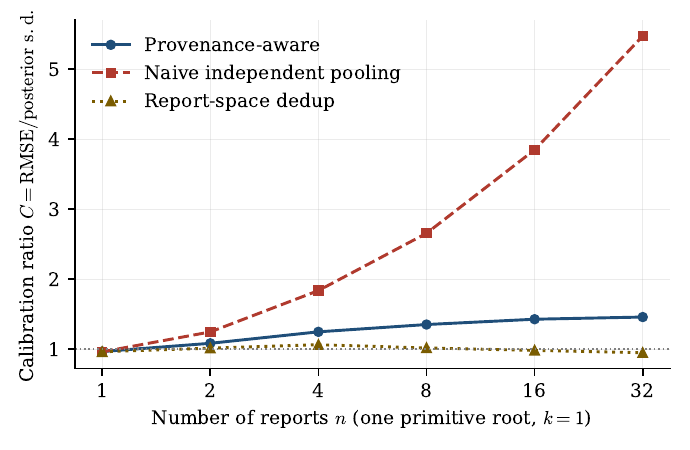}
\caption{Empirical calibration ratio as report count increases with one
primitive evidence root ($k=1$), 300 evaluation worlds. The
provenance-aware posterior stays within a narrow band; the
independence-assuming posterior becomes progressively overconfident. The
report-space dedup baseline is shown for reference; see
Section~\ref{sec:emp-2x2} for its behavior under manipulated
representation.}
\label{fig:emp-calibration}
\end{figure}

\begin{figure}[t]
\centering
\includegraphics[width=.68\linewidth]{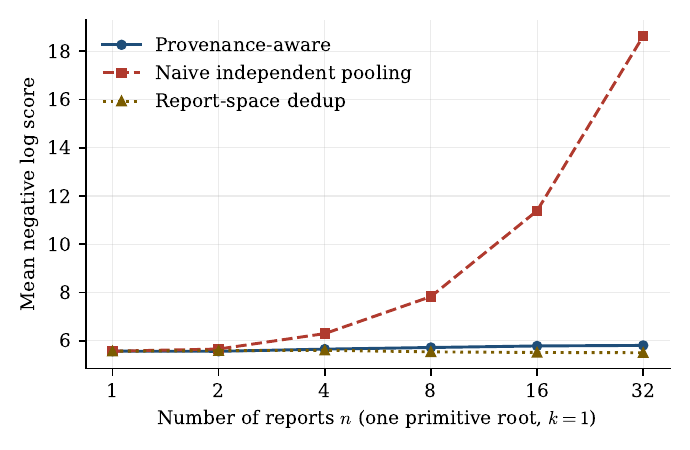}
\caption{Mean negative log score against report count $n$, one primitive
evidence root, 300 evaluation worlds.}
\label{fig:emp-logscore-n}
\end{figure}

The residual drift in provenance-aware coverage away from the nominal 0.95
is not attributable to the shared-root correction itself;
Section~\ref{sec:emp-gamma} below identifies its source directly.

\subsection{Independent evidence roots restore information}
\label{sec:emp-roots}

The second question is the mirror image: when additional reports really
do carry new evidence, does the overconfidence disappear? Holding report
count fixed at $n=16$ and varying the number of independent
primitive roots $k\in\{1,2,4,8,16\}$ isolates report count from evidential
ancestry, the empirical analogue of Section~\ref{sec:sim-rootcount}'s
simulation; this root-multiplicity design is Grid~B, the second
confirmatory grid. At $k=1$ the two aggregators are far apart: naive coverage is
0.400 against provenance-aware's 0.860, and naive mean negative log score
is 11.39 against provenance-aware's 5.79. As $k$ rises toward $n$, this
gap closes monotonically, and at $k=16$, where every report has its own
root and the naive independence assumption is literally correct, the two
aggregators are statistically indistinguishable: coverage 0.927 against
0.927, mean negative log score 4.584 against 4.585, RMSE 23.49 against
23.49 with overlapping bootstrap intervals. The $k=1$ cell of this design
reproduces Grid~A's $n=16$ cell exactly, since it is the same underlying
data viewed from the other axis, which serves as an internal consistency
check on the analysis pipeline. Figure~\ref{fig:emp-logscore-k} shows mean
negative log score against $k$.

\begin{figure}[t]
\centering
\includegraphics[width=.68\linewidth]{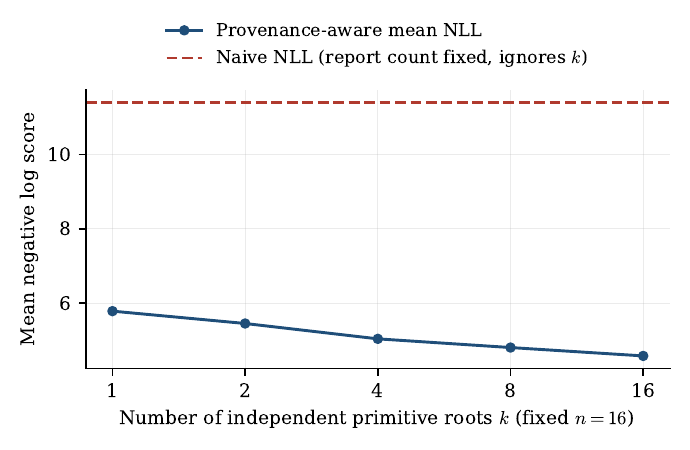}
\caption{Mean negative log score at fixed report count $n=16$ as the
number of independent primitive roots $k$ varies, 300 evaluation worlds.
Although the naive aggregator uses the same independence-based
uncertainty model for every $k$, its realized log score improves as
additional independent roots improve the accuracy of the pooled estimate;
what does not track $k$ is its assumed uncertainty, which is why its
calibration lags provenance-aware's until $k$ approaches $n$.}
\label{fig:emp-logscore-k}
\end{figure}

This result functions as a structural control on
Section~\ref{sec:emp-multiplicity}'s finding. The naive aggregator is not
simply penalized for having many reports; it is penalized specifically for
treating correlated reports as independent, and the penalty vanishes
exactly when that assumption becomes true. Report count alone does not
predict the naive-provenance gap; root count does.

\subsection{Correlated extraction errors and the information ceiling}
\label{sec:emp-gamma}

The third question concerns re-reading: how much can repeated extraction
from one fixed document actually add, and where does it stop? When
several agents independently extract information from a fixed noisy
source, additional agents can improve inference, but improvement should
exhibit diminishing returns toward a source-dependent ceiling; the
relevant empirical target is the existence and approximate level of that
ceiling. The independent-extraction specification underlying the
provenance-aware aggregator above assumes each report's extraction error
is drawn afresh. Grid~A's own data reject this assumption. The variance of the $m$-report
block mean, $\mathrm{Var}(\bar R_m-\Theta)$, computed across Grid~A's 300
worlds for $m\in\{1,2,4,8,16,32\}$, declines only slightly, from 9177 at
$m=1$ to 7510 at $m=32$, while the independent-extraction prediction
$\hat\sigma^2+\hat\nu^2/m$ falls from 7627 to 2678 over the same range and
lies outside the block mean's bootstrap confidence interval at every
$m\geq2$. Correlated replicate errors, the mechanism
Section~\ref{sec:correlated-extraction} anticipates for agents sharing a
base model, are the natural candidate: under that specification the
block-mean variance is
$\hat\sigma^2+\hat\gamma\hat\nu^2+(1-\hat\gamma)\hat\nu^2/m$ for a
correlation parameter $\hat\gamma$. This specification was already present
in the theoretical model; the decision to test it empirically was prompted
by the independent model's failure above, so the check that follows is
reported as an out-of-sample validation of an exploratory choice, not a
pre-registered confirmatory test.

Estimating $\hat\gamma$ exclusively on the 100 calibration worlds, from
the ratio of mean within-document report variance to $\hat\nu^2$, gives
$\hat\gamma_{\mathrm{cal}}=0.719$. The resulting prediction, with every
parameter estimated exclusively from the disjoint calibration split and
without fitting to Grid~A, falls inside Grid~A's bootstrap interval at all
six values of $m$, while the independent prediction remains outside for
$m\geq2$. An in-sample estimate of $\gamma$ from Grid~A's own
within-document variance gives 0.703, close to the calibration value and
consistent with a stable underlying parameter rather than an artifact of
either sample. Figure~\ref{fig:emp-precision} shows both predicted curves
against the empirical one.

\begin{figure}[t]
\centering
\includegraphics[width=.68\linewidth]{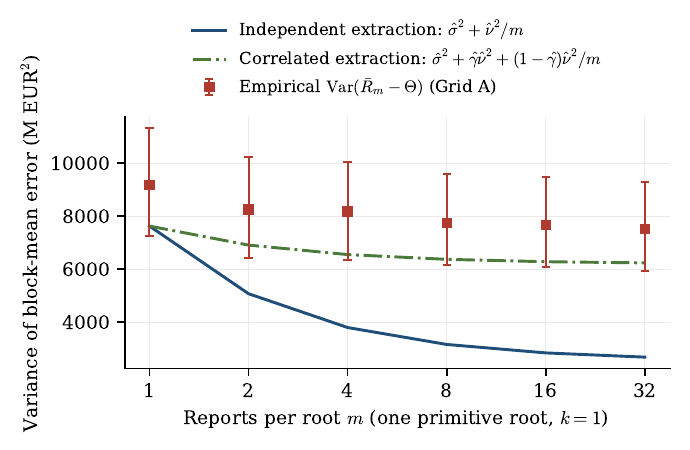}
\caption{Empirical block-mean variance curve, 300 Grid~A evaluation
worlds, against the independent-extraction prediction
$\hat\sigma^2+\hat\nu^2/m$ and the correlated-extraction prediction using
$\hat\gamma_{\mathrm{cal}}=0.719$ estimated exclusively on the disjoint
calibration split. Both predictions use only calibration-fit parameters;
neither is tuned on the plotted data.}
\label{fig:emp-precision}
\end{figure}

An aggregator built on this specification, replacing
Proposition~\ref{prop:saturation}'s block precision with the
correlated-extraction form $J_m=m/\bigl(v+(m-1)c\bigr)$ for
$v=\hat\sigma^2+\hat\nu^2$ and $c=\hat\sigma^2+\hat\gamma_{\mathrm{cal}}\hat\nu^2$,
and using only calibration-fit parameters, restores coverage to between
0.940 and 0.953 across every value of $n$ in Grid~A, with calibration
ratio between 0.95 and 0.96 throughout, compared to the plain
provenance-aware aggregator's drift down to 0.850. This aggregator is
presented as an exploratory extension, not a confirmatory result, and is
kept visually separate from the confirmatory figures:
Figure~\ref{fig:emp-gamma-aggregator} shows naive, provenance-aware, and
provenance-with-correlated-extraction coverage and calibration ratio
together. A direct sensitivity check, subtracting the calibration-split
report bias of $-11.3$ from every report and re-running every aggregator,
changes no evaluation cell materially (the $k=16$ coverage in Grid~B moves
from 0.927 to 0.920), so the residual gap between provenance-aware
coverage and the nominal 0.95 is attributed to correlated extraction error
and heavy-tailed residuals rather than to bias. Extraction noise pooled
across Grid~A's root-0 reports departs visibly from normality in the
tails, with excess kurtosis 2.35 and 1.2\% of standardized residuals
beyond three standard deviations against 0.27\% expected under a normal
distribution (Figure~\ref{fig:emp-qq}), a further source of Gaussian
likelihood misspecification contributing to the remaining gap to nominal
coverage.

\begin{figure}[t]
\centering
\includegraphics[width=\linewidth]{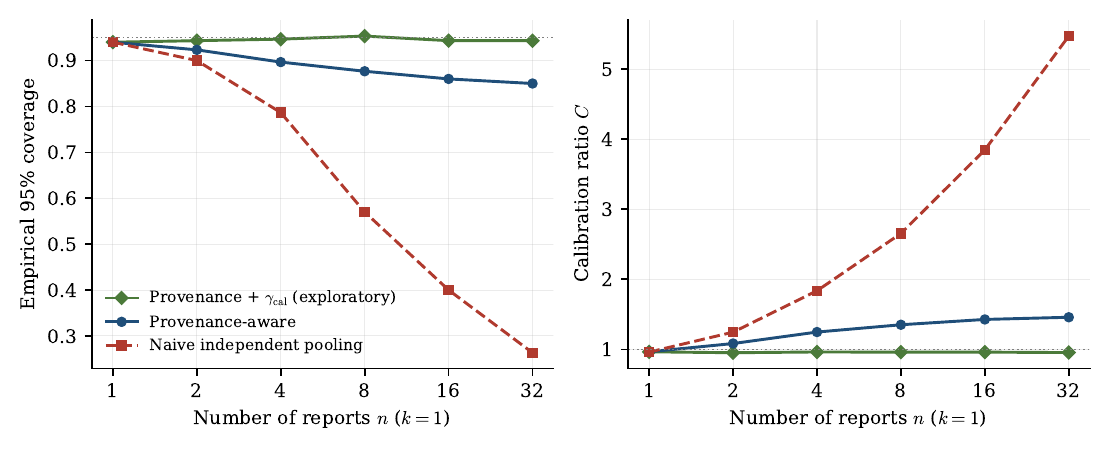}
\caption{Exploratory extension: naive, provenance-aware, and
provenance-aware with the correlated-extraction correction, coverage
(left) and calibration ratio (right) across Grid~A. Kept visually separate
from the confirmatory Figures~\ref{fig:emp-calibration} and~\ref{fig:emp-logscore-n}.}
\label{fig:emp-gamma-aggregator}
\end{figure}

\begin{figure}[t]
\centering
\includegraphics[width=.68\linewidth]{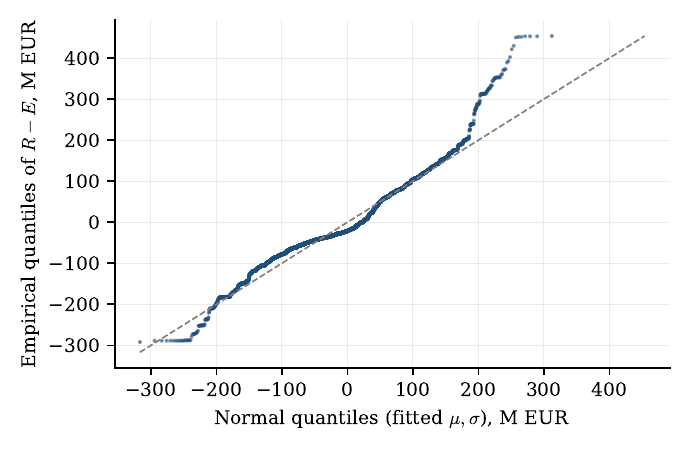}
\caption{Quantile-quantile plot of extraction noise $R-E$ against a fitted
normal distribution, pooled over Grid~A's root-0 reports ($n=9600$).}
\label{fig:emp-qq}
\end{figure}

\subsection{Representation similarity does not identify evidential ancestry}
\label{sec:emp-2x2}

The final question is whether the reports themselves can reveal where
they came from: can a system tell, by inspecting report content, whether
the reports descend from one source or several? A report-space defense
should be misled if it identifies evidential
independence from semantic similarity rather than from ancestry itself,
the possibility Theorem~\ref{thm:non-identifiability} leaves open once
provenance is unavailable. A $2\times2$ design tests this directly. Every world generates
four independent evidence roots; ancestry is manipulated by which roots
feed a cell's four reports, shared for cells A and C, independent for
cells B and D. Representation is manipulated separately and only after
elicitation, through a controlled replacement of the textual rationale
channel that leaves the estimate and every evidential input unchanged: one
frozen prompt, identical across every cell, produces an estimate and a raw
rationale; a deterministic post-hoc renderer, which never reads the
report's estimate or raw rationale, then supplies the rationale text that
is actually embedded, drawn from one of two fixed template pools that
describe the task's common three-segment structure in different
vocabulary rather than paraphrasing any individual report's specific
content. Cells A and B receive one rendering style, C and D the other. An
earlier design that manipulated representation through the elicitation
prompt itself was found in a pilot to change extraction accuracy between
styles, confounding representation with extraction quality; the redesign
used here removes that confound by construction, since the estimate cannot
depend on a rendering step that never sees it (the earlier design and why
it was rejected are recorded in the supplementary experimental log).

Across 200 evaluation worlds, the effect of the representation
manipulation on the deduplication baseline's inferred cluster count is
$+1.425$ (paired bootstrap by world, 95\% CI $[1.363,1.485]$), computed as
the average cluster count in the two dissimilar-representation cells minus
the average in the two similar-representation cells. The effect of a
fourfold change in true root count is $+0.040$ (95\% CI
$[-0.045,0.120]$), computed the same way across the ancestry contrast. The
representation effect is large and precisely estimated, whereas the
ancestry effect is close to zero and statistically indistinguishable from
zero. A separate similarity check,
$\Delta_{BC}=\mathbb E[\cos(B)]-\mathbb E[\cos(C)]$, comparing
independent-root reports rendered alike against shared-root reports
rendered differently, gives $0.248$ (95\% CI $[0.236,0.259]$) on the
evaluation set, closely reproducing a pilot estimate of $0.242$
(95\% CI $[0.216,0.270]$) obtained on a disjoint set of 30 worlds before
the design was frozen.

\begin{figure}[t]
\centering
\includegraphics[width=.7\linewidth]{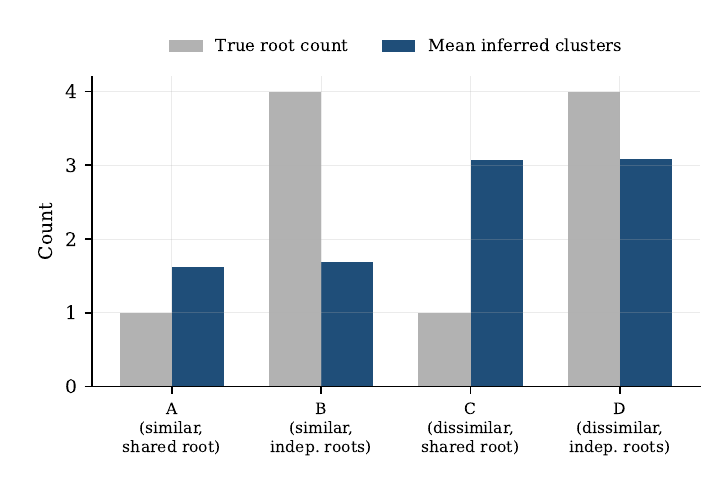}
\caption{True root count versus mean inferred cluster count for each of
the four design cells, 200 evaluation worlds. Cells sharing a
representation style (A, B) and cells sharing a dissimilar-representation
pairing (C, D) cluster together regardless of true ancestry.}
\label{fig:emp-clusters}
\end{figure}

Because cells A and C share identical evidence and identical estimates,
differing only in which representation was rendered downstream, the
contrast between them isolates representation's effect from every other
factor. Under the deduplication aggregator, this contrast moves every
downstream metric in the harmful direction with confidence intervals
excluding zero: coverage falls by $0.125$ ($[-0.175,-0.080]$), mean
negative log score rises by $0.223$ ($[0.099,0.373]$), and calibration
ratio rises by $0.416$ ($[0.333,0.513]$). The analogous contrast between
the independent-root cells, D minus B, shows no coverage or calibration
effect distinguishable from zero and a favorable shift in log score,
because for cell D the dissimilar rendering happens to coincide with the
cell's true independent structure rather than working against it. Naive
and provenance-aware aggregators are, as they should be, identical between
A and C and between B and D, since neither reads report content.

A threshold, chosen to balance mean log score equally across all four
cells on 60 calibration worlds disjoint from evaluation and frozen before
evaluation began, does not resolve the underlying conflict. The
false-merge rate is the fraction of different-root report pairs assigned
to the same cluster; the false-split rate is the fraction of same-root
report pairs assigned to different clusters. At this
threshold the false-merge rate in cell B is 0.622 and the false-split rate
in cell C is 0.846. Sweeping every cosine-similarity value that is
actually distinct in the evaluation data, so that every threshold capable
of producing a different clustering decision is covered exactly rather
than by a spaced grid, the minimum achievable value of the worse of the
two error rates is $\min_t\max\bigl(\mathrm{FMR}_B(t),\mathrm{FSR}_C(t)\bigr)=0.846$.
No threshold for this similarity-based clustering rule achieves low
false-merge and false-split error simultaneously on the observed
evaluation set. Figure~\ref{fig:emp-tradeoff} shows the false-merge and
false-split rates against each other across the swept range
(Figure~\ref{fig:emp-clusters}, above, shows mean inferred cluster count
against true root count), and Figure~\ref{fig:emp-coverage} shows coverage
by aggregator and cell.

\begin{figure}[t]
\centering
\includegraphics[width=.68\linewidth]{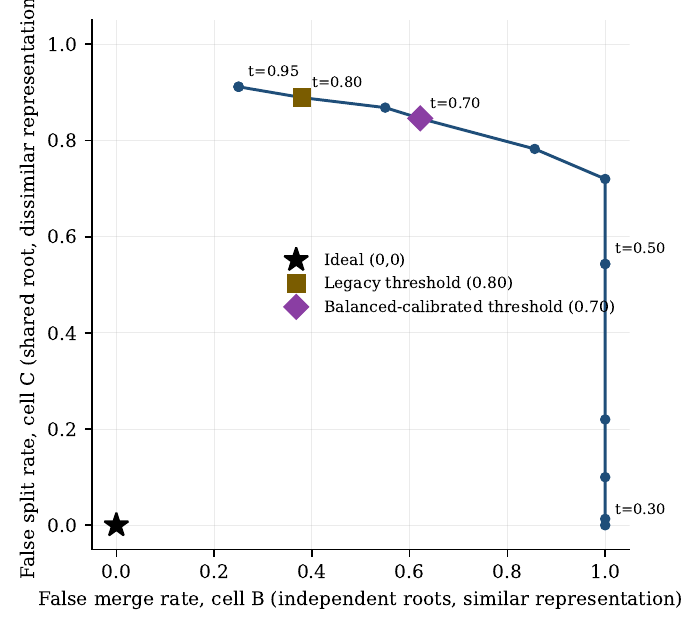}
\caption{False merge rate in cell B against false split rate in cell C
across the swept threshold range, 200 evaluation worlds. The legacy
threshold (0.80) and the balanced-calibrated threshold (0.70) are marked;
the ideal point (0,0) is far from the entire curve.}
\label{fig:emp-tradeoff}
\end{figure}

\begin{figure}[t]
\centering
\includegraphics[width=.7\linewidth]{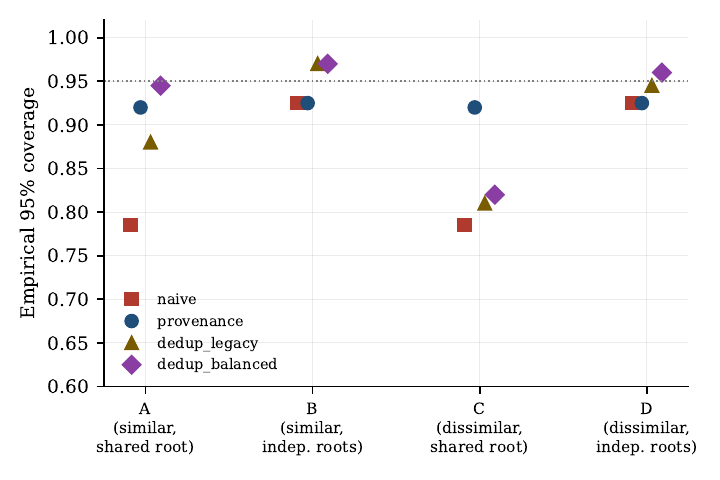}
\caption{Empirical 95\% coverage by aggregator and design cell, 200
evaluation worlds. Naive and provenance-aware are identical between cells
sharing true ancestry (A/C and B/D); the dedup aggregators are not.}
\label{fig:emp-coverage}
\end{figure}

This experiment is a controlled representation stress test: it establishes
that a controlled change to the textual rationale channel alone, holding
the latent state, the primitive evidence, the extraction process, and the
estimate fixed, can invert what report-only similarity implies about
evidential ancestry for one concrete mechanism (rationale embedding with
cosine-threshold clustering) on one task family. It illustrates the
report-only non-identifiability result of Theorem~\ref{thm:non-identifiability}
operationally; it does not establish that every report-space defense must
fail this way, nor does it characterize how often adversarial or
coincidental representation conflicts of this kind arise in deployed
systems.

\subsection{Summary of empirical findings}
\label{sec:emp-summary}

Three results, obtained from a single small language model reading
synthetic documents, are consistent with the paper's central claims.
Increasing report count at fixed evidential ancestry produces naive
overconfidence that grows with report count and is not accompanied by a
comparable accuracy gain, while a provenance-aware aggregator using the
same reports stays within a narrow calibration band; the naive-provenance
gap closes specifically as evidential ancestry, not report count, moves
toward independence. The same model's replicate extractions on a shared
document are more strongly correlated than an independent-extraction model
predicts, a pattern that a correlated-extraction refinement of the
shared-root model, evaluated out of sample, both explains and corrects. A
controlled design that manipulates representation similarity and
evidential ancestry independently shows that a concrete report-space
deduplication mechanism tracks the former far more strongly than the
latter, and that no similarity threshold resolves the resulting conflict
on the evaluation data.

These findings come with limitations that bound their scope. All data are
synthetic quarterly-revenue memos from one task family, generated to
control extraction difficulty precisely; no natural-language corpus or
real-world evidentiary claim is involved. A single main model was used
throughout, and no claim is made about how these effects scale across
model families; an alternative, larger model was evaluated and found
unsuitable for studying extraction noise at all, since it does not expose
an explicit sampling temperature and so cannot produce genuinely
independent replicate samples, a finding recorded with the full model
comparison in the supplementary experimental log rather than repeated
here. The $2\times2$ design's representation manipulation is a
deliberately artificial stress test and should not be read as a claim
about the rationale styles produced in ordinary use. The extraction-noise
distribution is heavy-tailed (excess kurtosis 2.35, with 1.2\% of
standardized residuals beyond three standard deviations against 0.27\%
under normality), so the Gaussian aggregators compared here are themselves
somewhat misspecified in the tails, a limitation that applies uniformly
across naive, provenance-aware, and correlated-extraction aggregators and
so does not favor any one of them. Finally, a pre-registered
informativeness gate from the pilot phase was not met on its literal
threshold; inspection traced the shortfall to genuine arithmetic
unreliability in the main model rather than to document ambiguity, and it
is retained here as a disclosed, uncorrected property of the studied
regime rather than treated as disqualifying, with the full gate protocol
and threshold recorded in the supplementary experimental log.

Code, the exact elicitation prompts, the frozen raw model outputs, the
calibration and evaluation world-id splits, and the analysis scripts
underlying every result in this section will be made publicly available.
The frozen Grid~A and Grid~B evaluation data are additionally packaged as
a standalone benchmark with a documented aggregator interface, a scoring
command, and reference baseline adapters, so that other dependence-aware
aggregation methods can be evaluated on the same report-multiplicity and
root-multiplicity axes without rerunning any model calls.
Every reported table and figure in this section can be reproduced from
the archived, frozen model outputs without access to the Anthropic API or
any other hosted model provider. (The synthetic-model validation of
Section~\ref{sec:simulation} involves no model calls at all and is
separately reproducible from its fixed simulation seed.) Rerunning the live elicitation calls that generated those
outputs is possible for a reader with valid API credentials but
constitutes a replication of the experiment rather than a guaranteed exact
computational reproduction, since hosted model providers can update model
weights or serving behavior behind a fixed model identifier without
notice.